\documentclass{article} 
\usepackage{iclr2027_conference,times}

\usepackage{amsmath,amsfonts,bm}

\def\eqref#1{equation~\ref{#1}}

\def\1{\bm{1}}

\DeclareMathAlphabet{\mathsfit}{\encodingdefault}{\sfdefault}{m}{sl}
\SetMathAlphabet{\mathsfit}{bold}{\encodingdefault}{\sfdefault}{bx}{n}

\newcommand{\E}{\mathbb{E}}

\newcommand{\Cov}{\mathrm{Cov}}

\usepackage{hyperref}
\usepackage{url}
\usepackage[linesnumbered,ruled,vlined]{algorithm2e}
\SetKwComment{Comment}{/* }{ */}
\SetNoFillComment

\SetCommentSty{proxypropcommentfont}
\usepackage{graphicx}
\usepackage{amsmath,amsfonts,amssymb}
\usepackage{pgfplots} 
\usepgfplotslibrary{groupplots}
\usepgfplotslibrary{fillbetween}
\usetikzlibrary{positioning, arrows.meta}
\usetikzlibrary{calc,patterns}
\usetikzlibrary{shapes.geometric}
\usepackage{subfig}
\usepackage{hhline}
\usepackage{amsthm}
\usepackage{comment}
\newtheorem{theorem}{Theorem}[section]
\newtheorem{corollary}[theorem]{Corollary}
\newtheorem{remark}[theorem]{Remark}
\newtheorem{proposition}[theorem]{Proposition}
\usepackage{wrapfig} 

\newcommand{\algo}{ProxyProp}

\pgfplotsset{every axis title/.append style={text depth=0.25ex}}
\newsavebox{\legendbox}
\newlength{\panelAwidth}
\newlength{\panelBwidth}

\definecolor{ide}{RGB}{90,90,90}
\definecolor{cReuse}{HTML}{CC3311}  
\definecolor{cLinear}{HTML}{D85D44}  
\definecolor{cIdentity}{HTML}{E37E68}  
\definecolor{cRandom}{HTML}{ED9D8B}  
\definecolor{cNores}{HTML}{FF7F0E}  
\definecolor{cShared}{HTML}{8B4486}  
\definecolor{cDFA}{HTML}{2A78D6}  
\definecolor{cDRTP}{HTML}{009988}  
\definecolor{cDNI}{HTML}{332288}  
\definecolor{cDBlock}{HTML}{117733}  
\definecolor{cBP}{HTML}{444444}  
\definecolor{cMeZO}{HTML}{20B2AA} 

\usepackage{xcolor}
\pgfplotsset{compat=1.18}

\title{Local proxies for memory-efficient training}

\author{Wu S. Zihan\thanks{Equal contribution from WS Zihan and BA Yuan.} \\
School of Computer and \\ Communication Sciences \\
EPFL \\
Lausanne, Switzerland \\
\small{\texttt{zihan.wu@epfl.ch}} \\
\And
Brian A. Yuan\footnotemark[1]\\
Machine Learning Research \\
TU Wien \\
Vienna, Austria \\
\small{\texttt{brian.yuan@tuwien.ac.at}}
\And
Guillaume Bellec \\
Machine Learning Research \\
TU Wien \\
Vienna, Austria \\
\small{\texttt{guillaume.bellec}} \\ \small{\texttt{@tuwien.ac.at}}
}

\iclrfinalcopy 

\begin{document}

\maketitle


\begin{abstract}
Local learning algorithms can reshape resource requirements of artificial intelligence.
The memory bottleneck of standard end-to-end backpropagation lies in the need to store all network activations until the loss gradient has been propagated back through.
Local learning avoids this, but typically at a cost in accuracy. 
We study how to achieve accurate and memory efficient local learning in transformers. 
For each transformer block, we train a local proxy network to predict the global network output via distillation. 
Gradients can be independently propagated through this local proxy while estimating well the backpropagation gradients.
In theory, this approach is exact for deep linear networks; in non-linear residual models, the gradient error can be bounded when using the appropriate residual proxy.
We verify this numerically and demonstrate that optimizing the distillation loss improves gradient estimation.
Evaluating the performance of proxy propagation on CIFAR-100 in vision transformers, we find that it achieves a better accuracy/memory trade-off compared to other local algorithms. 
Finally, we show that the method is further relevant for fine-tuning large language models on a small memory budget. 
Our method trains faster and achieves higher performance than memory-efficient zeroth-order learning algorithms with comparable memory cost.

\end{abstract}

\section{Introduction}

Conventionally, intelligence emerges in neural network models from end-to-end training with backpropagation \citep{rumelhart1986learning}.
By design, backpropagation induces a backward locking problem \citep{jaderberg2017syngrad}. 
A forward pass first computes the network activations from inputs to outputs and then the backward pass propagates gradients in reverse order.
In the meantime, network activations need to be stored, increasing memory requirements with network depth. 
The large memory consumption is a major manufacturing cost and increases the energy consumption of training datacenters~\citep{gholami2024ai}. 
When training large language models (LLMs) under a limited memory budget, the network depth and sequence length that fit in memory therefore constrain the range of tasks and the reasoning capability of the model.

Historically inspired by neuroscience research \citep{hebb1949organization,markram2011history, gerstner2018eligibility}, local learning is one way of avoiding the backward locking problem. 
Since the brain appears to be a distributed network without a precise backward gradient communication pathway, it is natural to suppose that biological sub-networks mainly learn from their immediate inputs and internal representations \citep{lillicrap2020backpropagation}.
Yet few local learning algorithms have been scaled to transformers, and attempts so far have had limited success~\citep{filipovich2022scaling}.
Most of these algorithms were not designed to approximate the backpropagated gradient, so they struggle in deep architectures; on top of this, those designed for biological plausibility often come with an increase in training time. 
Within the scope of local learning, we seek an alternative to backpropagation that is mathematically grounded and memory efficient without compromising training time.


Towards this goal, we define locality by partitioning a large transformer into blocks, with the gradient computation in each block decoupled in the sense of~\cite{jaderberg2017syngrad}: block parameters are updated immediately after the forward pass of the block. 
Hence, we say that learning is ``local'' if it needs to store activations from only one block at a time in memory.
Additionally, when the block updates are fully independent, we say that the training algorithm is ``unlocked'', making possible an efficient form of model parallelism~\citep{belilovsky2020dgl}.
Here, we follow a principled approach to estimate global gradients locally.
To each block we equip a local proxy network with the objective of predicting the global network output via distillation. Gradients of each block are then computed with local backpropagation through the proxy instead of the downstream blocks.
This way, proxy propagation (ProxyProp) offers theoretical insights that translate into a practical algorithm:
\begin{itemize}

    \item {\em Theory:} ProxyProp is an approximation of the global backpropagation. Despite the locality and the simplicity of the prediction objective, a linear proxy implicitly estimates the expected network Jacobian, and is therefore exact in deep linear networks.
    \item {\em Theory:} For non-linear residual networks, when the proxy follows a residual parameterization, the local gradients are exact at initialization and can achieve a bounded gradient estimation error when the skip connections of the network are dominating. 
    
    \item {\em Simulation:} For supervised classification in vision transformers (ViT), across various network partitionings, ProxyProp offers a better trade-off between memory and accuracy over other local algorithms. 
    \item {\em Simulation:} ProxyProp enables memory-efficient fine-tuning of LLMs, achieving higher performance than zeroth-order optimization techniques (MeZO) at comparable memory cost (even when MeZO uses 8x more compute time).
    %
\end{itemize}

\begin{figure}
      \centering
      \subfloat[Algorithm schematic]{%
          \resizebox{0.48\linewidth}{!}{\begingroup   
\definecolor{cGrad}{HTML}{2E8B57}
\begin{tikzpicture}[
  font=\small,
  blk/.style={draw=gray!70, fill=gray!8, rounded corners=2pt, minimum width=1.35cm, minimum height=0.56cm},
  head/.style={blk, fill=gray!22, minimum width=1.55cm},
  dots/.style={fill=white, inner sep=1pt, text=gray!80, font=\small},
  modbox/.style={rounded corners=2pt, minimum width=0.95cm, minimum height=0.52cm},
  proxy/.style={modbox, draw=cReuse, fill=cReuse!22},
  mem/.style={draw=cBP, minimum width=0.17cm, minimum height=0.44cm, inner sep=0pt},
  held/.style={mem, fill=cBP},
  always/.style={mem, fill=cBP!25},   
  stream/.style={gray!80, line width=1.1pt},
  fwd/.style={-stealth, gray!80, line width=0.7pt},
  grad/.style={-stealth, cGrad, line width=1.5pt},
  zlab/.style={font=\scriptsize, text=gray!90, fill=white, inner sep=1.5pt},
  title/.style={font=\small\bfseries},
]
\newcommand{\col}[1]{
  \draw[stream] (0, -0.85) -- (0, 4.92);
  \node[head] (#1h) at (0, 5.2) {head $\;\mathcal{L}$};
  \foreach \y/\n/\f in {0.7/3/{f^{l}}, 2.7/4/{f^{L-1}}, 3.8/5/{f^{L}}} {
    \node[blk] (#1\n) at (0, \y) {$\f$};
    \draw[fwd] (0, \y - 0.5) -- (#1\n.south);
  }
  \draw[fwd] (0, 4.7) -- (#1h.south);
  \node[dots] at (0, -0.6) {$\vdots$};
  \node[dots] at (0, 1.95) {$\vdots$};
  \node[zlab] at (0, 0.0) {$z_{l-1}$};
  \node[zlab] at (0, 1.4) {$z_l$};
  \node[zlab] at (0, 4.5) {$z_L$};
}
\newcommand{\localmem}{
  \node[held] at (-1.25, 0.7) {};
  \node[mem]  at (-1.25, 2.7) {};
  \node[mem]  at (-1.25, 3.8) {};
}
\newcommand{\hatchbar}[2]{
  \node[always] (hb) at (#1, #2) {};
  \begin{scope}
    \clip (hb.south west) rectangle (hb.north east);
    \foreach \d in {-0.4,-0.3,...,0.5} \draw[cBP, line width=0.4pt] (#1-0.2+\d, #2-0.3) -- (#1+0.1+\d, #2+0.3);
  \end{scope}
}
\newcommand{\module}[3]{
  \node[#2, anchor=west] (#1m) at ([xshift=0.75cm]#13.east) {#3};
  \draw[grad] ([yshift=-0.12cm]#1m.west) -- ([yshift=-0.12cm]#13.east);
}

\col{b}
\node[title, text=cBP] at ([yshift=0.5cm]bh.north) {backprop};
\foreach \y in {0.7,2.7,3.8} \node[held] at (-1.25, \y) {};
\draw[grad] (1.35, 4.92) -- (1.35, 0.45);

\begin{scope}[xshift=4.2cm]
\col{d}
\node[title, text=cReuse] at ([yshift=0.5cm]dh.north) {ProxyProp};
\localmem
\module{d}{proxy, text=cReuse}{$p_l$}
\draw[-stealth, cReuse, line width=0.7pt] ([yshift=0.12cm]d3.east) -- ([yshift=0.12cm]dm.west);
\node[font=\scriptsize, text=cReuse, anchor=west] at ([xshift=0.06cm]dm.east) {$\approx z_L$};
\draw[grad] ([yshift=0.5cm]dm.north) -- (dm.north);
\node[font=\scriptsize, text=cGrad, anchor=west, inner sep=1pt] at ([xshift=0.05cm, yshift=0.25cm]dm.north) {$\partial\mathcal{L}/\partial z_L$};
\end{scope}
\end{tikzpicture}
\endgroup}%
          \label{subfig1a:schem}
      }\hfill
      \subfloat[VRAM vs. accuracy trade-off]{%
          \resizebox{0.48\linewidth}{!}{\begin{tikzpicture}
\input{figure/palette}  
\begin{axis}[
  width=0.7\linewidth, height=0.55\linewidth,
  xtick={1,2,3,4,5,6,7.2}, xticklabels={1,2,3,4,5,6,12.3}, minor xtick={},
  xmin=1, xmax=7.6, ymin=45, ymax=68,
  xlabel={peak GPU memory, GB },
  ylabel={CIFAR-100 test accuracy},
  axis lines=left, axis line style={gray!60},
  grid=major, grid style={gray!20},
  tick label style={font=\small}, label style={font=\small},
  legend style={at={(0.98,0.03)}, anchor=south east, draw=none, fill=none, font=\footnotesize},
  legend cell align=right,
  legend plot pos=right,
  error bars/y dir=both, error bars/y explicit,
  error bars/error bar style={line width=0.5pt}, error bars/error mark options={mark size=1.5pt, line width=0.5pt},
  every axis plot/.append style={line width=1.2pt, mark=*, mark size=2.2pt, mark options={fill=., draw=white, line width=0.5pt}},
]
\fill[black, opacity=0.045] (axis cs:1.40,10) rectangle (axis cs:2.05,75);
\fill[black, opacity=0.045] (axis cs:2.35,10) rectangle (axis cs:3.05,75);
\fill[black, opacity=0.045] (axis cs:4.20,10) rectangle (axis cs:5.05,75);
\node[font=\scriptsize, gray, anchor=north, align=center] at (axis cs:1.694,68.) {12×1};
\node[font=\scriptsize, gray, anchor=north, align=center] at (axis cs:2.677,68.) {6×2};
\node[font=\scriptsize, gray, anchor=north, align=center] at (axis cs:4.605,68.) {3×4};
\coordinate (brk) at (axis cs:6.55,45);   
\addplot[cReuse, mark=*]  coordinates {(4.64,64.90)+-(0,.44) (2.76,61.51)+-(0,.19) (1.90,57.58)+-(0,.40)};
\addlegendentry{\algo{} (ours)}
\addplot[cShared, mark=*]  coordinates {(4.57,64.06)+-(0,.26) (2.63,60.15)+-(0,.45) (1.66,51.96)+-(0,.44)};
\addlegendentry{Local loss}
\addplot[cDFA, mark=triangle*, mark size=2.6pt]    coordinates {(4.62,55.32)+-(0,.19) (2.74,49.96)+-(0,.37) (1.87,35.66)+-(0,.24)};
\addlegendentry{DFA}
\addplot[cDRTP, mark=triangle*, mark size=2.6pt, mark options={rotate=180, fill=., draw=white, line width=0.5pt}]   coordinates {(4.62,58.56)+-(0,.41) (2.74,54.66)+-(0,.28) (1.87,34.97)+-(0,.66)};
\addlegendentry{DRTP}
\draw[cBP, dashed, line width=0.5pt] (axis cs:1.0,66.13) -- (axis cs:7.6,66.13);
\node[star, star points=5, star point ratio=2.25,
      fill=yellow, draw=black!60, line width=0.4pt,
      minimum size=10pt, inner sep=0pt]
  at (axis cs:7.2,66.13) {};
\node[font=\scriptsize, gray, anchor=north] at (axis cs:7.2,65.2) {backprop};
\end{axis}
\draw[white, line width=5pt] ($(brk)+(-6pt,0)$) -- ($(brk)+(6pt,0)$);
\draw[gray!60, line width=0.6pt] ($(brk)+(-5pt,-3pt)$) -- ($(brk)+(-1pt,3pt)$);
\draw[gray!60, line width=0.6pt] ($(brk)+(1pt,-3pt)$) -- ($(brk)+(5pt,3pt)$);
\end{tikzpicture}}%
          \label{subfig1b:accmem}
      }
      \caption{\textbf{ProxyProp and its accuracy/memory trade-off.} 
      \textbf{\ref{subfig1a:schem}}: In ProxyProp, when updating block $f^\ell$, only activations of that block need to be stored. The loss gradients are propagated through a proxy function instead of the other blocks.
      \textbf{\ref{subfig1b:accmem}}: Memory and accuracy tradeoff for ProxyProp and other local methods; the 12-layer vision transformer (ViT) is either partitioned into 3, 6 or 12 blocks. 
      }
      \label{fig:placeholder}
\end{figure}

\section{Review: State of the Art}

The ``local learning'' terminology has been defined with different aims in different contexts (see \cite{bredenberg2023formalizing} for a formal review). 
In bio-inspired computing, it is a constraint for bio-plausible learning circuitry.
This line of research originated from the observation that backpropagation requires a backward circuit with symmetric feedback weights, for which there is no anatomical evidence \citep{lillicrap2020backpropagation}. 
Different lines of work have relaxed this requirement \citep{lillicrap2016fa, akrout2019wm, max2024contwm, nokland2016direct}, or sought compatibility with Hebbian learning or predictive coding \citep{whittington2017predictivecoding, scellier2017equilibrium, halvagal2023lpl, illing_2021}. 
All these algorithms formalize how to replace backpropagation through the network hierarchy in space, which is complementary to the replacement of the backpropagation of information through time \citep{bellec2020eprop, murray2019local}. 
Together, these algorithms were foundational in bringing local learning theory to deep networks and have inspired circuitry for learning in physical systems~\citep{momeni2023phyll}, photonics~\citep{wang2026odfa}, analog hardware \citep{pehle2022brainscales} and digital ASIC technology~\citep{su2025elfcore}.

We focus in particular on decoupling the stack of network layers into $L$ gradient-isolated blocks, where each block learns locally.
This can be achieved by stopping the gradient propagation across blocks and optimizing each network block with a local loss, either being supervised \\\citep{belilovsky2019greedy, nokland2019local, frenkel2021learning, shing2026diffusionblocks} or self-supervised \citep{lowe2019gim, illing_2021, hinton2022forward, halvagal2023lpl, siddiqui2024blockwise,chen2025self}. 
Such methods have also been applied to finetuning pretrained models \citep{key2023local, gelbard2026unlocking}. Although local losses work relatively well in practice, it is not understood how the stack of decoupled blocks builds a hierarchy of representations (see ~\cite{delrocq2026self,korchinski2026learn} for some analyses).

Towards a principled algorithm to approximate the global gradient from local information, we build instead a gradient estimate $g_{\theta_\ell}^{loc}$ for layer $\ell$ parameters $\theta_\ell$ that intends to approximate the backpropagation gradient ${\partial \mathcal{L}}/{\partial \theta_\ell}$ denoted $g_{\theta_\ell}^{BP}$. 
The difficulty is that $g_{\theta_\ell}^{loc}$ needs to estimate a global quantity from local network activations.
There are only few local theories explaining the emergence of globally coherent gradient from local calculations.
For a class of self-supervised contrastive losses \citep{zihan2026localssl}, the local estimations can be exact, shown to work well to train unsupervised convolutional networks.
Alternatively, \cite{jaderberg2017syngrad} train a local auxiliary network to explicitly predict the gradient bootstrapped from the following block.
However, in practice, the compounding errors of the auxiliary networks make it numerically challenging. 



\section{Theory: proxy distillation with accurate gradient propagation}

\textbf{Definition of ProxyProp.}
We partition a deep network into $L$ blocks such that: $z_\ell = f_{\theta}^{\ell}(z_{\ell-1})$ where $z_{\ell} \in \mathbb{R}^{d_\ell}$ is the activation of block $\ell$ and $z_0$ is the input. 
We consider an arbitrary loss head $\mathcal{L}(\theta_h, z_L, y)$ that typically includes head parameters $\theta_h$ and may depend on target tokens or labels $y$. 
For brevity, we write $\mathcal{L}(z_L)$.
The aim is to approximate the backpropagated gradients $g_{\theta_\ell}^{BP}$.

Our method defines a proxy network ${p}_{\phi}^\ell$ distilled from the network output $z_L$ such that $z_L \approx {p}_{\phi}^\ell(z_\ell)$. 
For memory efficiency, we construct the proxy such that ${p}_{\phi}^\ell(z_\ell)$ or its gradient propagation (\textit{i.e.} ${\partial {p}_{\phi}^\ell / \partial z_\ell}$) does not require large memory storage.
Typically, ${p}_{\phi}^\ell(z_\ell)$ can be either a linear function of $z_\ell$ or a tiny transformer.
Under this construction summarized in Figure \ref{fig:placeholder}, our algorithm ProxyProp backpropagates the loss gradients through the proxy instead of the expensive layers ($f_\theta^{l+1}, \cdots ,f_\theta^L$), such that our estimator of the full backpropagation gradients can be written as:
\begin{equation}
\label{eq:proxygrad}
    g_{\theta}^{loc} =
    \frac{\partial f_\theta^\ell}{\partial \theta}^\top 
    \frac{\partial p_\phi^\ell}{\partial z_\ell}^\top
    \frac{\partial \mathcal{L}}{\partial z_L} ~~~ \text{(ProxyProp)}
\end{equation}
This equation is the core of ProxyProp (Algorithm~\ref{alg:training}). 
The algorithm is ``local'': after the calculation of $f_\theta^\ell$ (line 7 of Algorithm~\ref{alg:training}), $g_{\theta_\ell}^{loc}$ can be computed and the parameters of block $\ell$ can be updated at once, without waiting for the forward or backward pass of the later blocks.
Consequently, we can release the stored activation $z_{\ell-1}$ (line 11) immediately after, such that the memory requirement for network activations does not scale with the network depth $L$ as it would in backpropagation. 
Table \ref{tab:complexity} in appendix summarizes this: while the memory requirement scales linearly with depth in backpropagation, it is independent of the depth in ProxyProp.

We make three further clarifications \emph{(a-c)} relevant to understanding the design of this algorithm.
\emph{(a)}~Algorithm~\ref{alg:training} is ``local'', and the weight updates are ``unlocked'' such that (line 6 to 11) could be parallelized.
\emph{(b)}~The right hand term ${\partial \mathcal{L}}/{\partial z_L}$ requires a \emph{pre-}forward pass (line 1 to 5) to compute the last network activation $z_L$ and the loss gradient. Thus, there remains a weaker ``locking'' at this level. In the discussion we describe a ProxyProp-fully-unlocked variant (Algorithm ~\ref{alg:unlock}) which can be fully distributed.
\emph{(c)} In this paper, we study ProxyProp as described in Algorithm~\ref{alg:training}, our theoretical analyses relying on the observation that $g_{\theta}^{loc}$ becomes exact if the proxy Jacobian equals the large network Jacobian: $\partial p_{\phi}^\ell/{\partial z_\ell} = {\partial z_L}/{\partial z_\ell}$. 


\begin{algorithm}[t]
\caption{Description of the ProxyProp algorithm. No activations or gradients are stored or computed unless specified. The input $z_0$ is given. $p_{\phi}^L(z_l)$ is always the identity.}
\label{alg:training}
\vspace{4pt}
\Comment{Pre-forward pass and loss head update}
\For{$\ell = 1$ \KwTo $L$}{
    $z_\ell \gets f_\ell(z_{\ell-1})$\;
    \textbf{delete} $z_{\ell-1}$\;
}
Store $z_L$ and ${\partial \mathcal{L}}/{\partial z_L}$\;
Update $\theta_h$ with ${\partial \mathcal{L}}/{\partial \theta_h}$ \;
\vspace{4pt}
\Comment{Local ProxyProp algorithm}
\For{$\ell = 1$ \KwTo $L$}{
    $z_\ell \gets f_\ell(z_{\ell-1})$\;
    Compute the proxy $p_{\phi}^\ell(z_\ell)$\;
    Update $\theta_\ell$ with local gradient from Equation~\ref{eq:proxygrad}\;
    Update $\phi$ with gradient step on the distillation loss from Equation~\ref{eq:proxyfunc}\;
    \textbf{delete} $z_{\ell-1}$\;
}
\end{algorithm}

\textbf{Proxy distillation objective.}
Notoriously in the synthetic gradient literature \citep{jaderberg2017syngrad}, the difficulty in efficient decoupled neural interface (DNI) lies in approximating well the backpropagated gradients at each block. 
Our strategy distills the large network into the proxy $p_\phi$, minimizing:
%
\begin{equation}
    \phi^* = \operatorname{argmin}_{\phi} \mathbb{E}_{z_l}
    \left[|| z_{L} - p_{\phi} ( z_\ell ) ||^2 \right]
    \label{eq:proxyfunc}
\end{equation}
where the expected is estimated by batches of samples.
Distillation from the large network output $z_L$ does not guarantee \textit{a priori} that the backpropagated gradient or the Jacobians $\partial p_{\phi}^\ell/{\partial z_\ell}$ and ${\partial z_L}/{\partial z_\ell}$ will become equal, which is necessary to achieve accurate gradients according to clarification \textit{c}.
Alternatives include a bootstrapped synthetic gradient loss~\citep{jaderberg2017syngrad} and explicit Jacobian matching \citep{czarnecki2017sobolev}; in our experiments in Section~\ref{sec:results}, these either gave inaccurate gradients or required second-order derivatives, resulting in substantial memory consumption.
By extension of the results from \citet{srinivas2018knowledge}, if Equation \ref{eq:proxyfunc} is minimized under random perturbations of $z_\ell$, then so, implicitly, is the squared error of the Jacobians.
In practice, we even observed that minimizing Equation \ref{eq:proxyfunc} aligns the backpropagated gradients without repeated perturbations. 
Below, we suggest an alternative theory that is well supported by our simulations.


\textbf{Gradient consistency in linear proxies.}
We may first consider a proxy function that is linear. 
This case allows for a theoretical treatment explaining cases where optimizing Equation \ref{eq:proxyfunc} yields accurate gradients.
We consider a linear proxy function $p_{\phi}(z_\ell)=P_\ell z_\ell + b_\ell$ with parameters $\phi=(P_\ell,b_\ell)$ that predict the output layer $z_L$ using the hidden layer $z_\ell$.
For each iteration and parameter $\theta$, we can resolve in closed form the parameters $\phi^\star=(P_\ell^*, b_\ell^*)$ (minimizing Equation \ref{eq:proxyfunc}) and analyze the resulting gradients.
In this setting, we find a theoretical link between the Jacobian of the proxy ${\partial p_{\phi}^\ell}/{\partial z_\ell} = P_l^*$ and the unknown Jacobian ${\partial z_L}/{\partial z_\ell}$ of the large network:
\begin{proposition}
\label{thm-exact}
For any (non-linear) network distilled with a linear proxy $p_{\phi}(z_\ell)=P_\ell z_\ell + b_\ell~$, if $z_\ell$ follows a Gaussian distribution $\mathcal{N}(\mu_{\ell}, \Sigma_{\ell})$ with $\Sigma_{\ell}$ positive definite, and the layers $f_{\ell'}$ ($\ell' > \ell$) are locally Lipschitz such that $\E\big[\big\|\partial z_L/\partial z_\ell\big\|\big] < \infty$, the matrix $P_\ell^*$ of the optimal linear proxy is the expected Jacobian of the remaining layer stack:
\begin{equation}
P_\ell^* = \E_{z_\ell} \left[ \frac{\partial z_L}{\partial z_\ell} \right] 
\end{equation}
\end{proposition}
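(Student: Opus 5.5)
The plan is to combine the closed-form least-squares solution for the linear proxy with Stein's lemma (Gaussian integration by parts). Write $F(z_\ell) := z_L$ for the composition $f_L\circ\cdots\circ f_{\ell+1}$, viewed as a function of $z_\ell$; the Jacobian $\partial z_L/\partial z_\ell$ is then $\nabla F(z_\ell)$. Minimizing $\E\|F(z_\ell) - P z_\ell - b\|^2$ over $(P,b)$ is an ordinary linear regression. Setting the derivative in $b$ to zero gives $b^* = \E[F] - P\mu_\ell$. Substituting back, the derivative in $P$ gives the normal equations $\Cov(F(z_\ell), z_\ell) = P\,\Sigma_\ell$. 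Since $\Sigma_\ell$ is positive definite, the objective is strictly convex in $P$ and the unique minimizer is
\begin{equation*}
P_\ell^* = \E\big[(F(z_\ell)-\E F)(z_\ell-\mu_\ell)^\top\big]\,\Sigma_\ell^{-1}.
\end{equation*}
This step needs $\E\|F(z_\ell)\|^2<\infty$ so that the objective is finite. I would either add this as an implicit assumption or note that it follows in the cases of interest, for instance when $F$ has polynomially bounded growth.

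The second step is to evaluate the cross-covariance with Stein's lemma. For $z\sim\mathcal{N}(\mu,\Sigma)$ and a locally Lipschitz $F$ with $\E\|\nabla F\|<\infty$, the lemma states
\begin{equation*}
\E\big[F(z)(z-\mu)^\top\big] = \E[\nabla F(z)]\,\Sigma .
\end{equation*}
The centering term $\E F\,\E[(z-\mu)^\top]$ vanishes, so multiplying by $\Sigma_\ell^{-1}$ gives $P_\ell^* = \E[\partial z_L/\partial z_\ell]$, which is the claim.

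The main obstacle is justifying Stein's lemma under only local Lipschitzness, so this is where the real work lies. A composition of locally Lipschitz maps is locally Lipschitz, so by Rademacher's theorem $F$ is differentiable Lebesgue-almost everywhere. It is also absolutely continuous along almost every line, and the Gaussian law of $z_\ell$ is absolutely continuous, so $\E[\nabla F]$ is well defined.

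To prove the identity, I would first whiten with $z = \mu + \Sigma^{1/2} u$, $u\sim\mathcal{N}(0,I)$. This reduces the claim to $\E[G(u)u^\top]=\E[\nabla G(u)]$ for $G(u)=F(\mu+\Sigma^{1/2}u)$. I would then prove the reduced identity coordinatewise in one dimension: integrate by parts against the density $\varphi$, using $u\varphi(u) = -\varphi'(u)$ and the absolute continuity of $G$ along lines. Fubini over the remaining coordinates is justified by the integrability hypothesis $\E\|\nabla F\|<\infty$.

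The boundary terms $G\varphi\to 0$ at $\pm\infty$ need care. The standard remedy is Stein's original trick: write $G(u)-G(0)=\int_0^u G'$, then swap the order of integration using only $\E|G'|<\infty$. This avoids any growth assumption on $G$ itself.
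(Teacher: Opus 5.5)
Your proposal is correct and takes the same route as the paper. You solve the linear regression in closed form to get $P_\ell^* = \Cov(z_L, z_\ell)\,\Sigma_\ell^{-1}$, invoke the multivariate Stein lemma $\E[z_L (z_\ell-\mu_\ell)^\top] = \E[\partial z_L/\partial z_\ell]\,\Sigma_\ell$, and observe that the centering term vanishes. The two things you add are sound refinements of what the paper leaves implicit: you flag that $\E\|z_L\|^2<\infty$ is needed for the objective and covariance to be finite, and you sketch Stein's lemma under local Lipschitzness via whitening and Stein's Fubini trick, whereas the paper simply cites the lemma.
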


\begin{proof}
    Let's first resolve the closed-form solution for the parameters of the optimal linear proxy: $(P_\ell^*, b_\ell^*) = \operatorname*{arg\,min}_{P, b}\;\E\big\|\,z_L - Pz_\ell - b\big\|^2$. Setting the derivative with respect to $b_\ell$ to zero gives $b_\ell^\star = \E[z_L] - P\E[z_\ell]$, and substituting back, only the centered features remain, leaving us to minimize $\E\|\bar z_L - P \bar z_\ell\|^2$ where $\bar z_\ell = z_\ell - \E[z_\ell]$. Setting the derivative with respect to $P$ to zero gives $P_\ell^* = \Cov(\bar z_L, \bar z_\ell)\,\Sigma_{\ell}^{-1}$, where the positive definiteness of $\Sigma_\ell$ guarantees the existence of the inverse.
    %
    %
    Then we define $F_\ell(z_\ell) = f_L \circ f_{L-1} \dots \circ f_{\ell+1}(z_{\ell})$ which is also locally Lipschitz with $\E\|\partial z_L/\partial z_\ell\| < \infty$, and since $z_\ell$ is Gaussian with covariance $\Sigma_\ell$, the multivariate Stein's Lemma gives $\E[z_L \bar z_\ell^\top] = \E[\frac{\partial z_L}{\partial z_\ell}]\Sigma_\ell$. On the other hand, $\Cov(\bar z_L, \bar z_\ell) = \E[(z_L - \E[z_L]) \bar z_\ell^\top] = \E[z_L \bar z_\ell^\top] - \E[z_L] \E[\bar z_\ell]^\top = \E[z_L \bar z_\ell^\top]$ because $\E[\bar z_\ell]=0$. Therefore, we can replace the outer product of Stein's Lemma to get: $\E[\frac{\partial z_L}{\partial z_\ell}] = \E[z_L \bar z_\ell^\top] \Sigma_\ell^{-1} = \Cov(\bar z_L, \bar z_\ell)\Sigma_\ell^{-1} = P_\ell^*$.
    %
    %
    %
\end{proof}
%
%
\begin{corollary}
\label{thm-exact-linear} If the input $z_0$ is Gaussian with positive definite covariance $\Sigma_0$, and $F_\ell$ is a deep linear network ($f_{\theta}^\ell$ linear for all $\ell$) with full-row-rank feedforward weights, ProxyProp (Equation \ref{eq:proxygrad}) with linear proxy $\phi^*$ implements exactly the same algorithm as backpropagation:
\begin{equation}
    \text{(ProxyProp)} ~ g_{\theta}^{loc} = g_{\theta}^{BP} ~  \text{(backpropagation)}
\end{equation}
\end{corollary}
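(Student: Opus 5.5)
The plan is to reduce the corollary to Proposition~\ref{thm-exact} by checking its hypotheses for every block $\ell$. Then we observe that in the linear case the network Jacobian is constant, so its expectation equals its pointwise value. By clarification \emph{(c)}, this pointwise equality makes $g_\theta^{loc}$ exact.

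\textbf{Step 1: $z_\ell$ is Gaussian with positive definite covariance.} Write each block as $f_\theta^k(z)=W_k z + c_k$ (affine; a pure linear map is the case $c_k=0$). Then $z_\ell = A_\ell z_0 + a_\ell$ with $A_\ell = W_\ell\cdots W_1$. Since $z_0\sim\mathcal N(\mu_0,\Sigma_0)$, we get $z_\ell\sim\mathcal N(A_\ell\mu_0+a_\ell,\;A_\ell\Sigma_0A_\ell^\top)$. Each $W_k\in\mathbb R^{d_k\times d_{k-1}}$ has full row rank, so $W_k^\top$ is injective. Hence $A_\ell^\top=W_1^\top\cdots W_\ell^\top$ is injective, i.e.\ $A_\ell$ has full row rank. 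For $v\neq 0$ this gives $v^\top A_\ell\Sigma_0A_\ell^\top v=(A_\ell^\top v)^\top\Sigma_0(A_\ell^\top v)>0$, so $\Sigma_\ell$ is positive definite.

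\textbf{Step 2: the remaining hypotheses of Proposition~\ref{thm-exact}.} The maps $f_{\ell'}$ are affine, hence globally Lipschitz. The Jacobian $\partial z_L/\partial z_\ell = W_L\cdots W_{\ell+1}$ is a constant matrix, so its expected norm is finite. Proposition~\ref{thm-exact} then gives
\[
P_\ell^*=\E\left[\frac{\partial z_L}{\partial z_\ell}\right]=W_L\cdots W_{\ell+1}=\frac{\partial z_L}{\partial z_\ell}
\]
for every sample, not just in expectation. For $\ell=L$ the proxy is the identity by convention, which matches trivially.

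\textbf{Step 3: conclude.} By the chain rule, $g_{\theta_\ell}^{BP}=\frac{\partial f^\ell_\theta}{\partial\theta}^\top\frac{\partial z_L}{\partial z_\ell}^\top\frac{\partial\mathcal L}{\partial z_L}$. Substituting $\partial p^\ell_{\phi^*}/\partial z_\ell=P_\ell^*$ into Equation~\ref{eq:proxygrad} yields the same expression for every sample. Averaging over the batch therefore gives $g^{loc}_\theta=g^{BP}_\theta$ for every block. Since $\phi^*$ is recomputed at the current $\theta$, the identity holds at every iteration, so the whole training trajectories coincide.

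The only step needing real care is Step 1: showing that full row rank propagates through the product and yields a nondegenerate Gaussian at every layer. Once that holds, the rest is a direct application of Proposition~\ref{thm-exact} together with the constancy of the Jacobian.
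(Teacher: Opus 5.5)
Your proof is correct and follows the paper's argument: you show each $z_\ell$ is a nondegenerate Gaussian, invoke Proposition~\ref{thm-exact}, and use that the Jacobian of a linear stack is constant, so $P_\ell^*$ equals it pointwise and Equation~\ref{eq:proxygrad} reproduces the backpropagated gradient. The differences are cosmetic: you get positive definiteness from injectivity of $A_\ell^\top = W_1^\top\cdots W_\ell^\top$ instead of the paper's layer-by-layer induction $\Sigma_\ell = W_\ell\Sigma_{\ell-1}W_\ell^\top$, and you additionally allow affine biases.
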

\begin{proof}
    Since the network is linear, each layer activation $z_\ell$ is simply a linear transformation of the input and is therefore also Gaussian. The covariance at layer $\ell$ follows $\Sigma_\ell = W_\ell \Sigma_{\ell-1} W_\ell^T$ where $W_\ell$ is the linear feedforward weight. 
    With full-row-rank $W_\ell$, $\Sigma_{\ell-1}$ being positive definite implies positive definiteness of $\Sigma_\ell$. As $\Sigma_0$ is positive definite, $\Sigma_\ell$ is positive definite for all $\ell$.  
    Moreover, each $F_\ell$ is linear and therefore also locally Lipschitz. 
    By Proposition\ref{thm-exact}, $P_\ell^* = \E[\frac{\partial z_L}{\partial z_\ell}]$. Because $F_\ell$ is a linear function, $\frac{\partial z_L}{\partial z_\ell}$ has no dependence on $z_\ell$, so $\frac{\partial z_L}{\partial z_\ell} = \mathbb{E}[\frac{\partial z_L}{\partial z_\ell}] = P_\ell^* $. Substituting this into Equation \ref{eq:proxygrad}, we find that the ProxyProp gradient is exactly the backpropagated gradient.
\end{proof}
\vspace{-0.3cm}
\textbf{Numerical verification in linear networks.}
We verify Proposition \ref{thm-exact} and Corollary \ref{thm-exact-linear} in a toy setting with linear networks.
We consider a six-layer MLP fed with batched images from the MNIST dataset \citep{lecun2010mnist}. When the MLP activation function is linear (squares in Figure \ref{subfig:mlpgradalign} left), the gradient $g_\theta^{loc}$ and the true gradient $g_\theta^{BP}$ align perfectly as predicted by Corollary \ref{thm-exact-linear}.

We then study the accuracy of a linear proxy in non-linear networks. 
In this case, even perfect estimation of the expected Jacobian $\E[{\partial z_L}/{\partial z_\ell}]$ does not mean good estimation of the gradient.
With non-linearities, the expected Jacobian does not equal the sample-specific Jacobian $\partial z_L/\partial z_\ell$ \citep{delrocq2026self}. 
Indeed, the gradient alignment is imperfect in our toy setting (circles in Figure \ref{subfig:mlpgradalign}), but ProxyProp approaches the best linear projection achievable (computed using the true gradient: $P' = \operatorname{argmin}_P || {\partial \mathcal{L}}/{\partial z_\ell} - P^\top {\partial \mathcal{L}}/{\partial z_L}||^2$).
As a comparison, DFA \citep{nokland2016direct} (which uses a random projection) has little alignment (crosses).
In our toy MLP setting, we demonstrate that the proxy distillation loss of Equation \ref{eq:proxyfunc} builds alignment without explicitly minimizing the difference of the gradients.


\textbf{Gradient consistency in non-linear proxies. }
As shown in Figure \ref{subfig:mlpgradalign}, even the very best linear proxy cannot yield perfect alignment in non-linear network.
The problem is further exacerbated in transformers taking sequential inputs.
Token-wise linear proxies are time invariant, so linear projections cannot track the temporal computation performed by attention layers in transformers.
In such cases, the proxy ideally is a non-linear sequence model, so we choose $p_\phi^\ell$ to be a transformer itself but with smaller depth and width. We use trainable linear projectors to map the hidden dimension of the network to that of the proxies. 

\begin{figure}
    \centering
    \providecommand{\figwidth}{}%
    \providecommand{\figheight}{5cm}%
    \sbox{\legendbox}{%
    \begin{tikzpicture}
    \begin{axis}[
      hide axis,
      scale only axis, width=1pt, height=1pt,
      xmin=0, xmax=1, ymin=0, ymax=1,
      legend columns=1,
      legend cell align=left,
      legend style={at={(0.5,0.5)}, anchor=center,
        font=\tiny, draw=none, fill=none, inner sep=0pt},
    ]
    \addlegendimage{only marks, cLinear, mark=*, mark size=1.6pt}
    \addlegendentry{ProxyProp}
    \addlegendimage{only marks, cLinear, mark=o, mark size=1.6pt, thick}
    \addlegendentry{ProxyProp at initialization}
    \addlegendimage{only marks, cLinear, mark=square*, mark size=1.6pt}
    \addlegendentry{ProxyProp in linear MLP}
    \addlegendimage{only marks, ide, mark=diamond*, mark size=1.8pt}
    \addlegendentry{ideal linear projection}
    \addlegendimage{only marks, cDFA, mark=x, mark size=2.2pt}
    \addlegendentry{DFA}
    \addlegendimage{cLinear, thick}
    \addlegendentry{residual proxy}
    \addlegendimage{cLinear, thick, densely dotted}
    \addlegendentry{non-residual proxy}
    \end{axis}
    \end{tikzpicture}}%
    \setlength{\panelAwidth}{\dimexpr(\linewidth-\wd\legendbox-1em)*64/100\relax}%
    \setlength{\panelBwidth}{\dimexpr(\linewidth-\wd\legendbox-1em)*32/100\relax}%
    \hspace*{\dimexpr\wd\legendbox+1em\relax}%
    \subfloat[]{%
        \renewcommand{\figwidth}{\panelAwidth}%
        \scriptsize%
        \label{subfig:mlpgradalign}%
        \let\legenddrawn\undefined
        \tikzset{every picture/.append style={execute at end picture={%
            \ifx\legenddrawn\undefined
              \node[overlay, anchor=east, inner sep=0pt, xshift=-1em]
                at (current bounding box.west |- current axis.center)
                {\usebox{\legendbox}};%
              \global\let\legenddrawn\relax
            \fi}}}%
        \begin{tikzpicture}
\begin{axis}[
  name=left,
  width=0.25\linewidth, height=3.5cm,
  xlabel={layer $\ell$},
  ylabel={$\cos(\hat g_\ell,\, g^{\mathrm{bp}}_\ell)$},
  title={\small MLP},
  xmin=-0.3, xmax=5.3, ymin=-0.1, ymax=1.05,
  xtick={0,1,2,3,4,5}, axis lines=left,
  grid=major, grid style={black!10},
  tick label style={font=\scriptsize}, label style={font=\small},
  clip=false]
\addplot[cLinear, thick, densely dotted, mark=square*, mark options={solid}, mark size=1.6pt] coordinates {(0,1.0000) (1,1.0000) (2,1.0000) (3,1.0000) (4,1.0000) (5,1.0000)};
\addplot[cLinear, thick, densely dotted, mark=*, mark size=1.6pt,
  mark options={solid}] coordinates {(0,0.4124) (1,0.5509) (2,0.6792) (3,0.7833) (4,0.9622) (5,1.0000)};
\addplot[ide, thick, mark=diamond*, mark size=1.8pt,
  mark options={solid}] coordinates {(0,0.5281) (1,0.6517) (2,0.7824) (3,0.8923) (4,1.0000) (5,1.0000)};
\addplot[cDFA, thick, mark=x, mark size=2.2pt,
  mark options={solid}] coordinates {(0,0.0003) (1,0.0520) (2,0.0366) (3,-0.0172) (4,-0.0203) (5,1.0000)};
\end{axis}
\hspace{-15pt}

\begin{axis}[
  at={(left.south east)}, anchor=south west, xshift=1.2cm,
  width=0.25\linewidth, height=3.5cm,
  xlabel={layer $\ell$},
  yticklabels={},
  title={\small residual MLP},
  xmin=-0.3, xmax=5.3, 
  y coord trafo/.code={\pgfmathparse{#1 > 0.5 ? (#1-0.93)*12+0.26 : #1}},
  y coord inv trafo/.code={\pgfmathparse{#1 > 0.2 ? (#1-0.26)/12+0.93 : #1}},
  ymin=-0.11225, ymax=1.004,
  ytick={0,0.94,0.97,1},
  yticklabels={0,0.94,0.97,1},
  xtick={0,1,2,3,4,5}, axis lines=left,
  grid=major, grid style={black!10},
  tick label style={font=\scriptsize}, label style={font=\small},
  legend style={font=\tiny, draw=none, fill=none,
    at={(-2.5,1)}, anchor=north east},
  legend cell align=right,legend plot pos=left,
  clip=false]
\addlegendimage{cLinear, thick, mark=square*, mark size=1.6pt}
\addplot[cLinear, thick, mark=o, mark size=1.6pt,
  mark options={solid}] coordinates {(0,1.0000) (1,1.0000) (2,1.0000) (3,1.0000) (4,1.0000) (5,1.0000)};
\addplot[cLinear, thick, mark=*, mark size=1.6pt,
  mark options={solid}] coordinates {(0,0.9754) (1,0.9819) (2,0.9868) (3,0.9925) (4,1.0000) (5,1.0000)};
\addplot[ide, thick, mark=diamond*, mark size=1.8pt,
  mark options={solid}] coordinates {(0,0.9808) (1,0.9861) (2,0.9905) (3,0.9951) (4,1.0000) (5,1.0000)};
\addplot[cDFA, thick,, mark=x, mark size=2.2pt,
  mark options={solid}] coordinates {(0,0.0619) (1,0.0305) (2,0.0343) (3,0.0155) (4,-0.0827) (5,1.0000)};
\fill[white] ([xshift=-4pt]axis cs:-0.3,0.15) rectangle ([xshift=4pt]axis cs:-0.3,0.21);
\draw ([xshift=-4pt]axis cs:-0.3,0.145) -- ([xshift=4pt]axis cs:-0.3,0.165);
\draw ([xshift=-4pt]axis cs:-0.3,0.195) -- ([xshift=4pt]axis cs:-0.3,0.215);
\end{axis}
\end{tikzpicture}%
    }
    \hspace*{-15pt}
    \subfloat[]{%
        \renewcommand{\figwidth}{\panelBwidth}%
        \scriptsize%
        \label{subfig:residpred}%
        \begin{tikzpicture}
\begin{axis}[
  width=0.32\linewidth, height=3.5cm,
  xlabel={$p_0$ fitting steps},
  title={\small residual vs. non-residual proxy},
  ylabel={$\cos(\hat g_0,\, g^{\mathrm{bp}}_0)$},
  xmin=-15, xmax=830, ymin=-0.1, ymax=1.06,
  axis lines=left, grid=major, grid style={black!10},
  tick label style={font=\scriptsize}, label style={font=\small},
  clip=false]
\addplot[black!60, dashed, semithick]
  coordinates {(-16,0.9703) (832,0.9703)};
\addplot[cLinear, thick, mark=*, mark size=1.6pt] coordinates {(0,0.9061) (10,0.9162) (25,0.9294) (50,0.9469) (100,0.9626) (200,0.9701) (400,0.9723) (800,0.9733)} node[pos=0, anchor=north west, font=\scriptsize, color=cLinear, yshift=-4pt]
  {};
\addplot[cLinear, thick, densely dotted, mark=*, mark size=1.6pt,
  mark options={solid}] coordinates {(0,-0.0669) (10,0.0617) (25,0.2652) (50,0.5039) (100,0.7415) (200,0.9101) (400,0.9601) (800,0.9675)} node[pos=0, anchor=north west, font=\scriptsize, color=cLinear, yshift=5pt, xshift=5pt]
  {};
\end{axis}
\end{tikzpicture}%
    }%
    \caption{\textbf{Numerical illustration of the theory.}
    \textbf{\ref{subfig:mlpgradalign}}: (Left) Alignment of the ProxyProp and backpropagation gradients in a random 6-layer MLP processing MNIST digits. (Right) Same as left, but the alignment is higher because of the skip connections. 
    \textbf{\ref{subfig:residpred}}: In the same residual setting, gradient alignment improves when the proxy objective is optimized (the network remains frozen). The residual proxy (solid) aligns already at the first iteration and converges faster.
    }
    \label{fig:relu_mlp_exactness}
\end{figure}

\textbf{Gradient consistency in residual proxies.} Modern architectures (e.g. transformers) rely on residual connections~\citep{he2016resnet}.
We design ProxyProp to leverage this to improve the gradient estimations. 
Below we show that the gradient approximation of ProxyProp is exact in residual networks at initialization (Remark \ref{thm-naive-residual}), and the error is bounded whenever the residual contribution at each layer is small in comparison with the corresponding skip connection (Proposition \ref{thm-early-training}). 

Let's first consider the initialization. The preferred initialization in residual networks and transformers enforces $f_{\theta}^\ell ( z_\ell )=z_\ell$ by initializing to zero the output weights~\citep{zhang2018zeroresweight} or by using a scalar to gate the output activation~\citep{goyal2017resnetinit, bachlechner2021rezero}. At least during the early phase of training, the skip connections dominate the computation $z_L \approx z_\ell$, which reflects during gradient propagation as ${\partial \mathcal{L}}/{\partial z_\ell} \approx {\partial \mathcal{L}}/{\partial z_L}$.
In ProxyProp, a trivial identity proxy is therefore sufficient to propagate an accurate approximation during the first training iterations. More generally, we can choose any (non-linear) residual proxy of the form $p_{\phi}^\ell(z_\ell) = z_\ell + q_{\phi}^\ell(z_\ell)$ with $q_\phi^\ell=0$ at initialization. 
We start with a simple remark regarding the benefit of this strategy (the proof is a direct consequence of the large network and the proxy being initialized to identity by construction):
\begin{remark}
    \label{thm-naive-residual}
    Consider any (non-linear) residual network and a residual proxy using standard zeroed residual initialization. Then, ProxyProp computes the exact error gradient at the first iteration.
\end{remark}
More broadly in any residual architecture, we can bound the gradient approximation error of ProxyProp whenever the residual contributions are small in comparison with the skip connections.
To formalize this, we assume that each layer is residual $f_\theta^\ell(z_\ell) = z_\ell + r_\theta^\ell (z_\ell)$, such that the deviations of the residual $||r_\theta^\ell (z) - r_\theta^\ell (z')||$ are always smaller than the deviations in the skip connections $|| z - z'||$.
When $z$ and $z'$ are close, this is equivalent to assuming that the operator norm of the Jacobian of the residual is bounded $|| {\partial} r_\theta/ \partial z ||_{op} \leq \epsilon$, whereas the norm of the Jacobian through the skip connection is one.
This is expected when the weights are still close to the zeroed initialization of Remark \ref{thm-naive-residual}.
Then, if we parameterize the proxy $p_\phi^\ell(z_\ell) = z_\ell + q_\phi^\ell(z_\ell)$ such that $|| {\partial} q_\phi/ \partial z ||_{op} \leq \epsilon$, we have (detailed proof in the Appendix \ref{appendix:proof}):
\begin{wrapfigure}{r}{0.45\linewidth}
  \centering
  \pgfplotsset{compat=1.17}
\begin{tikzpicture}
\begin{axis}[
  width=6cm, height=6cm,
  xlabel={$p_1$ fitting steps},
  ylabel={$\cos(\hat g_1,\, g^{\mathrm{bp}}_1)$},
  xmin=0, xmax=1950, ymin=0, ymax=0.7,
  axis lines=left, grid=major, grid style={black!10},
  tick label style={font=\scriptsize}, label style={font=\small},
    legend style={font=\scriptsize, draw=none, fill=none, at={(0.97,0.05)}, anchor=south east},
  legend cell align=right, legend plot pos=right, clip=false]]
\addplot[cReuse, thick] coordinates {(0,0.4850) (50,0.4938) (100,0.4999) (150,0.4875) (200,0.4912) (250,0.4962) (300,0.5009) (350,0.5022) (400,0.4977) (450,0.5062) (500,0.5066) (550,0.5103) (600,0.5134) (650,0.5188) (700,0.5211) (750,0.5257) (800,0.5250) (850,0.5315) (900,0.5298) (950,0.5335) (1000,0.5411) (1050,0.5438) (1100,0.5454) (1150,0.5471) (1200,0.5577) (1250,0.5489) (1300,0.5443) (1350,0.5551) (1400,0.5504) (1450,0.5570) (1500,0.5583) (1550,0.5595) (1600,0.5583) (1650,0.5671) (1700,0.5613) (1750,0.5662) (1800,0.5711) (1850,0.5630) (1900,0.5692)};
\addlegendentry{transformer proxy, residual}
\addplot[cBP, thick] coordinates {(0,0.3398) (50,0.4034) (100,0.4403) (150,0.4609) (200,0.4727) (250,0.4795) (300,0.4815) (350,0.4864) (400,0.4890) (450,0.4908) (500,0.4869) (550,0.4913) (600,0.4934) (650,0.4917) (700,0.4936) (750,0.4920) (800,0.4938) (850,0.4956) (900,0.4952) (950,0.4932) (1000,0.4950) (1050,0.4975) (1100,0.4935) (1150,0.4945) (1200,0.4947) (1250,0.4973) (1300,0.4974) (1350,0.4947) (1400,0.4971) (1450,0.4954) (1500,0.4969) (1550,0.4960) (1600,0.4930) (1650,0.4958) (1700,0.4963) (1750,0.4964) (1800,0.4949) (1850,0.4924) (1900,0.4961)};
\addlegendentry{linear proxy, residual}
\addplot[cReuse, thick, dashed] coordinates {(0,0.0610) (50,0.1925) (100,0.2632) (150,0.2825) (200,0.3088) (250,0.3269) (300,0.3475) (350,0.3637) (400,0.3774) (450,0.3908) (500,0.4012) (550,0.4079) (600,0.4227) (650,0.4326) (700,0.4445) (750,0.4471) (800,0.4444) (850,0.4492) (900,0.4480) (950,0.4553) (1000,0.4538) (1050,0.4586) (1100,0.4611) (1150,0.4641) (1200,0.4695) (1250,0.4689) (1300,0.4723) (1350,0.4773) (1400,0.4819) (1450,0.4824) (1500,0.4947) (1550,0.4909) (1600,0.4890) (1650,0.4968) (1700,0.4917) (1750,0.4964) (1800,0.4981) (1850,0.4967) (1900,0.4996)};
\addlegendentry{transformer proxy, non-residual}
\addplot[cBP, thick, dashed] coordinates {(0,0.0166) (50,0.1348) (100,0.2528) (150,0.3340) (200,0.3839) (250,0.4142) (300,0.4287) (350,0.4435) (400,0.4519) (450,0.4555) (500,0.4537) (550,0.4604) (600,0.4640) (650,0.4632) (700,0.4658) (750,0.4647) (800,0.4672) (850,0.4692) (900,0.4691) (950,0.4673) (1000,0.4697) (1050,0.4723) (1100,0.4682) (1150,0.4705) (1200,0.4703) (1250,0.4733) (1300,0.4732) (1350,0.4706) (1400,0.4736) (1450,0.4718) (1500,0.4733) (1550,0.4728) (1600,0.4698) (1650,0.4735) (1700,0.4738) (1750,0.4742) (1800,0.4727) (1850,0.4697) (1900,0.4744)};
\addlegendentry{linear proxy, non-residual}
\end{axis}
\end{tikzpicture}
  \caption{\textbf{Non-linear proxy with residual connection improves gradient alignment in trained transformers.} Cosine similarity between the ProxyProp and the backpropagation gradients for the second block of a 12-layer ViT (partitioned into three blocks of four layers).
  \vspace{-0.3cm}
  }
  \label{fig:proxy-target}
\end{wrapfigure}
\begin{proposition}
    \label{thm-early-training}
    Consider a deep (non-linear) residual network which is dominated by the skip connections (at all layers we assume $|| {\partial} r_\theta/ \partial z ||_{op} \leq \epsilon$). Then, given a fixed depth $L$, if we choose a (non-linear) residual proxy with $|| {\partial} q_\phi/ \partial z ||_{op} \leq \epsilon$, there exists a constant $C$ (given explicitly in the Appendix \ref{appendix:proof}) such that the gradient estimation error is bounded as follows:
    $$ || g_{\theta_\ell}^{BP} - g_{\theta_\ell}^{loc} || \leq C \epsilon + O(\epsilon^2) $$
\end{proposition}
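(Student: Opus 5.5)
The proof compares the two gradients through the Jacobians they propagate. By the chain rule, $g_{\theta_\ell}^{BP} = \frac{\partial f_\theta^\ell}{\partial \theta}^\top \frac{\partial z_L}{\partial z_\ell}^\top \frac{\partial \mathcal{L}}{\partial z_L}$, while $g_{\theta_\ell}^{loc}$ in Equation~\ref{eq:proxygrad} has the same form with the network Jacobian replaced by the proxy Jacobian $\partial p_\phi^\ell/\partial z_\ell$. Subtracting the two and using submultiplicativity of operator norms gives
\begin{equation*}
\| g_{\theta_\ell}^{BP} - g_{\theta_\ell}^{loc}\| \le \Big\|\frac{\partial f_\theta^\ell}{\partial \theta}\Big\|_{op}\, \Big\| \frac{\partial z_L}{\partial z_\ell} - \frac{\partial p_\phi^\ell}{\partial z_\ell}\Big\|_{op}\, \Big\|\frac{\partial \mathcal{L}}{\partial z_L}\Big\| .
\end{equation*}
This reduces the statement to a Jacobian comparison, in line with clarification \emph{(c)}. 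I will assume that the parameter Jacobian $G_\ell := \|\partial f_\theta^\ell/\partial\theta\|_{op}$ and the loss gradient $\Lambda := \|\partial \mathcal{L}/\partial z_L\|$ are bounded, either uniformly or at the point under consideration. These two factors then enter the constant $C$.

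Next I expand the network Jacobian. Since every layer is residual, the Jacobian of the remaining stack is a product
\begin{equation*}
\frac{\partial z_L}{\partial z_\ell} = \prod_{k=\ell+1}^{L} (I + J_k), \qquad J_k = \frac{\partial r_\theta^k}{\partial z}\Big|_{z_{k-1}}, \quad \|J_k\|_{op}\le \epsilon .
\end{equation*}
Expanding the product gives $I + \sum_k J_k + R$. Here $R$ collects all products of two or more factors $J_k$. Bounding term by term, $\|R\|_{op} \le (1+\epsilon)^{L-\ell} - 1 - (L-\ell)\epsilon = O(\epsilon^2)$, which uses that the depth $L$ is fixed. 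On the proxy side, $\partial p_\phi^\ell/\partial z_\ell = I + Q$ with $\|Q\|_{op}\le\epsilon$.

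The identity terms cancel in the difference of Jacobians, so
\begin{equation*}
\Big\|\frac{\partial z_L}{\partial z_\ell} - \frac{\partial p_\phi^\ell}{\partial z_\ell}\Big\|_{op} \le \sum_{k=\ell+1}^{L}\|J_k\|_{op} + \|Q\|_{op} + \|R\|_{op} \le (L-\ell+1)\,\epsilon + O(\epsilon^2).
\end{equation*}
Multiplying by the outer factors yields the claim with $C = (L-\ell+1)\, G_\ell\, \Lambda$.

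The main obstacle is justifying that $G_\ell$ and $\Lambda$ are bounded independently of $\epsilon$, since the statement only controls the residual Jacobians. I would first try to get this from smoothness of the loss head together with the near-identity structure: $z_L$ stays within a bounded distance of $z_\ell$, so on a compact input set $\partial\mathcal{L}/\partial z_L$ is bounded. Failing that, I would state these bounds as explicit assumptions inside $C$. The argument is also loose in one respect: nothing links the first-order terms $\sum_k J_k$ and $Q$, so it does not show that distillation improves the constant. That is acceptable, because the proposition only claims a first-order bound in $\epsilon$.
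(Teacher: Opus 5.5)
Your proof is correct and follows the paper's argument essentially step for step. You reduce to the Jacobian difference via submultiplicativity, expand $\prod_{k}(I+J_k)$ into paths, and bound the terms with two or more residual factors by $\sum_{k\ge 2}\binom{L-\ell}{k}\epsilon^k = O(\epsilon^2)$. You arrive at the same constant $C=(L-\ell+1)\,\|\partial f_\theta^\ell/\partial\theta\|_{\mathrm{op}}\,\|\partial\mathcal{L}/\partial z_L\|$. The paper simply absorbs those two outer factors into $C$ without justifying that they are independent of $\epsilon$. Your caveat on that point is therefore extra care, not a gap relative to the paper.
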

\textbf{Numerical advantage of residual proxy.} To verify the impact of the residual proxy during pretraining, we construct a 6-layer MLP with skip connections at each layer: $z_{\ell+1} = z_\ell + W_{\ell, 2} \text{ReLU}(W_{\ell, 1}z_\ell)$.
We confirm Remark \ref{thm-naive-residual} by showing that with zeroed initialization ($W_{\ell, 2}=0$), the gradients are perfectly aligned at all layers in Figure \ref{subfig:mlpgradalign} (right).
Away from initialization (non-zero $W_{\ell,2}$), $|| {\partial} r_\theta/ \partial z ||_{op}$ is not necessarily small, but gradient alignment of the residual proxy is above $0.9$, approaching the best possible alignment. Even when the distillation loss is not yet fully minimized, the residual proxy gradient aligns remarkably well (Figure \ref{subfig:residpred}). Distillation eventually improves the alignment further.

With trained transformers, approaching perfect alignment is much harder, though the residual parameterization still shows a clear advantage. 
To show this empirically, we trained a ViT on CIFAR-100 classification \citep{Krizhevsky09learningmultiple} for 100 epochs with end-to-end backpropagation.
Then, we partition the ViT into three blocks and train a proxy on the penultimate block with a frozen backbone.
We observe in Figure \ref{fig:proxy-target} that, firstly, transformer proxies are better than linear proxies, and secondly, a residual proxy parameterization improves the alignment.






\section{Accuracy/memory trade-off in transformers}
\label{sec:results}


\textbf{Studies in vision transformers.}
To test empirically if the theoretical consistency of the proxy gradient translates to training transformers, we first implement a 12-layer ViT \citep{dosovitskiy2021an} with embedding dimension $d=128$ on CIFAR-100. 
The network is partitioned into $L=3$, $6$, or $12$ blocks, the three cases being denoted $3\times4$, $6\times2$, and $12\times1$ respectively to represent the number of layers per block.
Gradients are computed either with ProxyProp or alternative local learning algorithms. ProxyProp parameterizes a proxy for each block as a small transformer of lower dimension $d=64$. 
ProxyProp ($3\times4$) achieves performance close to backpropagation while reducing the VRAM requirements by 62\% (Figure \ref{subfig1b:accmem}).

\begin{table}
    \centering
    \small
    \caption{\textbf{ProxyProp offers a better trade off between accuracy and memory cost}. 
    We report here the results on CIFAR-100 with a 12-layer ViT. We use a relatively large batch of 512 to highlight the memory gain of the activation vector. Our backpropagation baseline with the same architecture and training hyperparameters achieved $66.13\%$ accuracy with 12.30 GB of peak VRAM usage. 
    The accuracy number is in bold when it is highest or non-significantly lower (one-sided paired $t$-test over 5 seeds, Holm-corrected; $\,p<0.05$).}
    \label{tab:b3-baselines}
    \begin{tabular}{lcccc}
    \hline
     Method & & top-1 accuracy & & VRAM (GB)\\
     & {$\scriptstyle 3\times4$} & {$\scriptstyle 6\times2$} & {$\scriptstyle 12\times1$} & {$\scriptstyle 3\times4/6\times2/12\times1$} \\
    \hline
    ProxyProp & $\boldsymbol{64.90} {\scriptstyle \pm 0.44}$ & $\boldsymbol{61.51} {\scriptstyle \pm 0.19}$ & $\boldsymbol{57.58} {\scriptstyle {\scriptstyle \pm 0.40}}$ & 4.64 / 2.76 / 1.90 \\
    \quad with DNI style loss & $\boldsymbol{64.43} {\scriptstyle \pm 0.54}$ & $60.42 {\scriptstyle \pm 0.63}$ & $\boldsymbol{58.31} {\scriptstyle \pm 0.60}$ & 13.28 / 7.92 / 5.10 \\
    \quad residual linear proxy & $63.18 {\scriptstyle \pm 0.56}$ & $58.86 {\scriptstyle \pm 0.32}$ & $47.67 {\scriptstyle \pm 0.27}$ & 4.62 / 2.69 / 1.72 \\
    \quad identity proxy & $58.15 {\scriptstyle \pm 0.84}$ & $57.08 {\scriptstyle \pm 0.57}$ & $46.07 {\scriptstyle \pm 0.41}$ & 4.62 / 2.69 / 1.72 \\
    \hline
    Backprop (one block) & $63.93 {\scriptstyle \pm 0.60}$ & $60.00 {\scriptstyle \pm 0.54}$ & $50.33 {\scriptstyle \pm 0.32}$ & 4.44 / 2.47 / 1.49 \\
    Local loss \citep{belilovsky2019greedy} & $64.06 {\scriptstyle \pm 0.26}$ & $60.15 {\scriptstyle \pm 0.45}$ & $51.96 {\scriptstyle \pm 0.44}$ & 4.57 / 2.63 / 1.66 \\
    DFA \citep{nokland2016direct}       & $55
     .32 {\scriptstyle \pm 0.19}$ & $49.96 {\scriptstyle \pm 0.37}$ & $35.66 {\scriptstyle \pm 0.24}$ & 4.62 / 2.74 / 1.87 \\
    DRTP \citep{frenkel2021learning}    & $58.56 {\scriptstyle \pm 0.41}$ & $54.66 {\scriptstyle \pm 0.28}$ & $34.97 {\scriptstyle \pm 0.66}$ & 4.62 / 2.74 / 1.87 \\
    DNI \citep{jaderberg2017syngrad}   & $41.39 {\scriptstyle \pm 0.45}$ & $33.11 {\scriptstyle \pm 0.34}$ & $16.67 {\scriptstyle \pm 3.58}$ & 4.66 / 2.72 / 1.76 \\
    \end{tabular}
    \vspace{-0.8\baselineskip}
  \end{table}
To justify the design choices of the proxies, we implement ProxyProp variants and compare their classification performance (\autoref{tab:b3-baselines}).
The simplest variant is when the proxy is reduced to identity (the error gradient ${\partial \mathcal{L}}/{\partial z_L}$ is applied at every layer without transformations). 
Despite its simplicity, it outperforms local learning algorithms that use random fixed matrices to train the blocks (see DFA and DRTP in Table \ref{tab:b3-baselines}). 
This observation supports the advantage of leveraging the residual structure when estimating local gradients (Remark \ref{thm-naive-residual} and Proposition \ref{thm-early-training}).
Consistent with our measurements on gradient alignment, non-linear transformer proxies perform better than linear ones.

Then, we examine if the simple proxy distillation objective could be replaced with loss functions that more explicitly target the gradient. 
We consider the DNI gradient bootstrapping loss from ~\cite{jaderberg2017syngrad}. 
Following their method, we train an auxiliary MLP to estimate a gradient bootstrapped recursively from the auxiliary network of the next layer: with our ViT architecture this method is not stable. 
Instead, we use ProxyProp but replace the distillation loss (Equation \ref{eq:proxyfunc}) with the DNI loss that explicitly estimates the global gradient. 
Despite the increase in memory from the second-order derivatives, it does not bring any significant improvement over our simpler distillation loss (see Table \ref{tab:b3-baselines}). 
This empirical finding supports, along with Proposition \ref{thm-exact} and Corollary\ref{thm-exact-linear}, that proxy distillation is a simple and efficient way to align the local gradients with the global gradients.

We measured for all methods the VRAM occupied in the GPU for the various training algorithms.
Given a relatively large batch size, the memory consumption is dominated by the storage of the network activations to illustrate the benefits of the algorithm.
By choosing the number of layers for each block in the network, we can navigate the trade-off between network performance and memory consumption (either 3, 6, or 12 blocks).
While ProxyProp pays a small storage to propagate gradients through the proxy, it sits on the Pareto front of the accuracy/memory trade-off (Figure~\ref{subfig1b:accmem}).

\begin{figure}[thb]
\vspace{-\baselineskip}
  \pgfplotsset{
    perf axis/.style={
      ybar=1pt, bar width=8pt, enlarge x limits=0.22, ymin=0,
      /pgfplots/error bars/y dir=both,
      /pgfplots/error bars/y explicit,
      /pgfplots/error bars/error bar style={
        line width=0.5pt
      },
      /pgfplots/error bars/error mark options={
        rotate=90,
        mark size=1.2pt
      },
      grid=major, grid style={dashed,gray!35},
      axis line style={black!70}, tick pos=left, tick style={black!70},
      tick label style={font=\small}, label style={font=\small},
      legend style={font=\footnotesize, draw=none, fill=none},
    },
    task axis/.style={perf axis, width=0.62\linewidth, height=0.3\linewidth,
      ylabel={Performance}, symbolic x coords={SST2,WIC,SQuAD}, xtick=data, ymin=30, ymax=100},
    cost axis/.style={perf axis, width=0.3\linewidth, height=0.3\linewidth, bar width=12pt,
      symbolic x coords={BP,MeZO,PP,SH}, xtick=data, xtick=\empty, xlabel={Method},
      enlarge x limits=0.3, xmajorgrids=false, every axis plot/.append style={bar shift=0pt}},
    method/.style={fill=#1, draw=#1!70!black,
      legend image code/.code={\draw[fill=#1, draw=black!70] (0pt,-2pt) rectangle (8pt,3pt);}},
    llama/.style={fill opacity=0.45, postaction={pattern=north east lines, pattern color=black!60}},
  }
    \centering
    \subfloat[\textbf{Performance across tasks}]{%
        \begin{tikzpicture}
        \begin{axis}[task axis, legend to name=perfLegend, legend columns=5,
                     legend style={column sep=0.8em}, ymin=45,
                      width=0.66\linewidth,
                      height=0.3\linewidth,]
        \addplot[method=cBP]     coordinates {(SST2,94.56) +- (0,1.09) (WIC,66.68) +- (0,3.44) (SQuAD,88.30) +- (0,0.82)}; \addlegendentry{Backprop}
        \addplot[method=cMeZO]   coordinates {(SST2,89.60)  (WIC,59.70)  (SQuAD,83.80) }; \addlegendentry{MeZO}
        \addplot[method=cReuse]  coordinates {(SST2,95.05) +- (0,0.33) (WIC,68.46) +- (0,1.36) (SQuAD,84.75) +- (0,0.91)}; \addlegendentry{ProxyProp}
        \addplot[method=cNores]  coordinates {(SST2,93.07) +- (0,1.13) (WIC,66.93) +- (0,1.31) (SQuAD,64.05) +- (0,7.97)}; \addlegendentry{ProxyProp (no-residual)}
        \addplot[method=cShared] coordinates {(SST2,93.72) +- (0,0.50) (WIC,64.58) +- (0,0.93) (SQuAD,68.36) +- (0,1.50)}; \addlegendentry{Local loss}
        \end{axis}
        \label{subfig:perf_tasks}
        \end{tikzpicture}}%
    \hfill
    \subfloat[\textbf{Peak Allocated VRAM}]{%
        \begin{tikzpicture}
        \begin{axis}[cost axis, ylabel={Peak VRAM (GB)}, ymax=70,
        width=0.30\linewidth,
        height=0.3\linewidth,]
        \addplot[method=cBP]     coordinates {(BP,64.18)};
        \addplot[method=cMeZO]   coordinates {(MeZO,25.76)};
        \addplot[method=cReuse]  coordinates {(PP,32.18)};
        \addplot[method=cShared] coordinates {(SH,31.70)};
        \end{axis}
        \label{subfig:peakgpu}
        \end{tikzpicture}}
    \\
    \subfloat[\textbf{Performance across models}]{%
        \begin{tikzpicture}
        \begin{axis}[task axis, legend columns=1, legend style={at={(0.5,0.99)}, anchor=north, inner sep=1pt},
                      width=0.66\linewidth, ymin=45,
                      height=0.3\linewidth,]
        \addplot[method=cBP,    bar shift=-21pt]        coordinates {(SST2,94.56) +- (0,1.09) (WIC,66.68) +- (0,3.44) (SQuAD,88.30) +- (0,0.82)};
        \addplot[method=cBP,    bar shift=-13pt, llama] coordinates {(SST2,94.59) +- (0,0.40) (WIC,69.12) +- (0,0.97) (SQuAD,89.88) +- (0,1.00)};
        \addplot[method=cMeZO,  bar shift=-4pt]         coordinates {(SST2,89.60) (WIC,59.70) (SQuAD,83.80)};
        \addplot[method=cMeZO,  bar shift=4pt,   llama] coordinates {(SST2,88.80) (WIC,63.40) (SQuAD,61.90)};
        \addplot[method=cReuse, bar shift=13pt]         coordinates {(SST2,95.05) +- (0,0.33) (WIC,68.46) +- (0,1.36) (SQuAD,84.75) +- (0,0.91)};
        \addplot[method=cReuse, bar shift=21pt,  llama] coordinates {(SST2,93.51) +- (0,0.23) (WIC,69.46) +- (0,1.34) (SQuAD,80.31) +- (0,2.37)};
        \addlegendimage{method=black!70}         \addlegendentry{OPT-13B (LoRA)}
        \addlegendimage{method=black!70, llama}  \addlegendentry{LLaMA-3.2-3B}
        \end{axis}
        \label{subfig:perf_models}
        \end{tikzpicture}}%
    \hfill
    \subfloat[\textbf{Wall clock time}]{%
        \begin{tikzpicture}
        \begin{axis}[cost axis, ylabel={Compute time (s)}, ymax=27, width=0.30\linewidth,
        height=0.3\linewidth,]
        \addplot[method=cBP]     coordinates {(BP,1.902)};
        \addplot[method=cMeZO]   coordinates {(MeZO,24.256)};
        \addplot[method=cReuse]  coordinates {(PP,2.859)};
        \addplot[method=cShared] coordinates {(SH,2.122)};
        \end{axis}
        \label{subfig:compute_time}
        \end{tikzpicture}}
    \\[2pt]
    \pgfplotslegendfromname{perfLegend}   
    \\[6pt]
    \caption{\textbf{ProxyProp fine-tunes LLMs with better tradeoff between performance, memory, and compute time.}
    \textbf{\ref{subfig:perf_tasks}} OPT-13B fine-tuned with rank-8 LoRA by Backprop, MeZO, and local algorithms; SST-2 and WiC (classification tasks) report accuracy, and SQuAD (question answering task) reports F1. MeZO values are taken from \citet{malladi2023mezo,tan2025dizo}.  Error bars are the 95\% confidence interval computed from 5 different random seeds.
    \textbf{\ref{subfig:perf_models}} Performance on OPT-13B with LoRA (solid) and LLaMA-3.2-3B (hatched).
    \textbf{\ref{subfig:peakgpu}, \ref{subfig:compute_time}} Peak VRAM and compute time for one training step of input batch with size 8, sequence length 800 on A100 GPU. MeZO's compute time is scaled up by its 16x number of training steps used in \citet{malladi2023mezo}.}

    \label{fig:llm_fine-tune}
    \vspace{-1\baselineskip}
\end{figure}

\textbf{Fine-tuning in large language models.}
We identify that LLM fine-tuning is a relevant use case where ProxyProp can be decisive.
While LLMs are pretrained in datacenters with extensive resources, fine-tuning these models is practicable with consumer-grade GPUs.
Choice of model and training context length are often limited by the VRAM capacity.
For fine-tuning, the memory storage can be roughly divided into three main categories: the weights, the optimizer, and the model activations.
The memory footprint of the weights may be reduced via quantization \citep{dettmers2023qlora} and the optimizer state via low-rank adapters (LoRA \citep{hu2022lora}); we study how local learning reduces the memory footprint of the activations.

A popular method to almost entirely eliminate the storage of the activations is to estimate gradients with repeated forward passes. 
These methods are often referred to as zeroth-order optimization techniques (e.g. MeZO, \cite{malladi2023mezo, tan2025dizo}); they are cheap in memory, but their noisy gradient estimates can lead to slow convergence and increase the number of required optimization steps.
We show below that ProxyProp offers a practical alternative.

First we demonstrate that ProxyProp is able to fine-tune LLMs.
We choose popular datasets and models with existing reference baselines for MeZO and backpropagation, then we fine-tune language models OPT-13B \citep{zhang2022opt} and LLaMA-3.2-3B \citep{dubey2024llama} on three datasets (1000 examples each): SST2, WIC, and SQuAD \citep{socher2013sst2,wang2019superglue,rajpurkar2016squad}. 
We split the entire model into either 7 or 8 blocks and use proxies of dimension 64 or 80 depending on the model (details in Appendix \ref{appendix:detail}). 
ProxyProp approaches the performance of backpropagation across the 3 tasks and 2 architectures, and also achieves higher performance than the MeZO results reported by \citet{malladi2023mezo,tan2025dizo} (Figure \autoref{subfig:perf_tasks}, \autoref{subfig:perf_models}).
We also confirm that alternative local algorithms struggle to achieve a comparable performance on these fine-tuning benchmarks (we tested ProxyProp without the residual parameterization and a block-wise local-loss baseline).

Our experiments show that ProxyProp uses resources efficiently.
The backpropagation baseline is implemented with the standard HuggingFace Trainer. 
Towards a fairer comparison, we inherit the trainer, swapping out PyTorch backpropagation with ProxyProp's computation of the gradients.
For OPT-13B, we use LoRA to reduce the memory consumption of the optimizer. 
On the same hardware, ProxyProp is roughly as fast as backpropagation (approximately $\times1.5$ slower).
In comparison, zeroth-order optimization becomes slow because it needs repetitions of parameter perturbations to reduce the variance of the gradient estimates. 
Even 16 times as many steps (as reported in \cite{malladi2023mezo}), which means eight times the wall-clock time (Figure \autoref{subfig:compute_time}), are not sufficient for MeZO to reach the performance of ProxyProp.
In terms of memory, MeZO requires less VRAM because it only needs to store the network weights (approximately $25.7$~GB). 
ProxyProp needs $32$~GB and backpropagation $64$~GB (see Figure \ref{subfig:peakgpu}).
ProxyProp achieves a strong memory-efficient performance without compromising training time.

To demonstrate the potential of ProxyProp to save memory even in combination with contemporary techniques, we combine it with LoRA, chunked loss, and gradient checkpointing (the default Hugging Face implementation that caches only the output of each transformer layer). 
We compare the memory scaling of training LLaMA-3.1-8B by backpropagation and by ProxyProp with artificial data at varying sequence lengths. 
As shown in Figure \ref{fig:llm_length}, ProxyProp with 4 blocks could fit in sequences over 50\% longer than backpropagation. 
As the number of blocks increases, ProxyProp fits in longer sequence and approaches the memory scaling of inference.

\begin{wrapfigure}{r}{0.47\linewidth}
  \vspace{-1.4\baselineskip}  
  \centering
  \input{figure/palette}
  \definecolor{inf}{RGB}{80,80,80}
  \pgfplotsset{compat=1.18}
    \pgfplotstableread{
    length inference bp b2 b4 b8
    1.024 15.233918 15.667524 15.673038 15.681301 15.697841
    10 17.331387 19.970292 18.832973 18.236822 17.955286
    20 19.659499 24.894255 22.612527 21.406024 20.819313
    30 21.990998 29.822176 26.396039 24.579184 23.687296
    40 24.328051 34.755294 30.182795 27.755589 26.558526
    50 26.662664 39.685572 33.969884 30.931594 29.430820
    60 28.989021 44.606768 37.746915 34.099006 32.292079
    70 31.319559 49.533056 41.529038 37.270777 35.158431
    80 33.658047 54.467548 45.319365 40.450554 38.032498
    90 35.985359 59.390742 49.098150 43.619186 40.896244
    100 38.316896 64.317273 52.880272 46.790956 43.762838
    110 40.650051 69.246598 56.665187 49.965520 46.632226
    120 42.988134 74.180987 60.453214 53.143196 49.506237
    }\memorydata
  \begin{tikzpicture}
    \begin{axis}[
      width=7cm,height=5cm,
      xmin=0,xmax=130,xtick={0,30,60,90,120},
      xlabel={Sequence length (thousand tokens)},
      grid=major,grid style={black!12},
      tick label style={font=\tiny},label style={font=\scriptsize},
      title style={font=\scriptsize},
      every axis plot/.append style={thick,mark size=1pt},
      name=total,ymin=15,ymax=50,
      ylabel={Peak allocated GPU memory (GB)},
      ylabel style={xshift=2pt},
      legend pos=south east,
      legend style={at={(1.02,-0.01)},draw=none,font=\fontsize{5.2pt}{5.5pt}\selectfont,fill=none},
    ]

    \addplot[cBP,mark=*,restrict x to domain=1.024:60] table[x=length,y=bp] {\memorydata};
    \addlegendentry{Backprop + LoRA + Ckpt}
    \addplot[cReuse,mark=*,restrict x to domain=1.024:80] table[x=length,y=b2] {\memorydata};
    \addlegendentry{ProxyProp (2 blocks)}
    \addplot[cReuse!70!white,mark=triangle*,restrict x to domain=1.024:100] table[x=length,y=b4] {\memorydata};
    \addlegendentry{ProxyProp (4 blocks)}
    \addplot[cReuse!40!white,mark=diamond*,restrict x to domain=1.024:110] table[x=length,y=b8] {\memorydata};
    \addlegendentry{ProxyProp (8 blocks)}
    \addplot[inf,dashed,mark=x] table[x=length,y=inference] {\memorydata};
    \addlegendentry{Inference}

    \end{axis}
    \end{tikzpicture}
  \caption{
  \textbf{ProxyProp approaches inference memory usage.}
  We measure peak memory allocation when fine-tuning a LLaMA-8B model with varying-length sequence input.
  All baselines benefit from the standard memory-efficient fine-tuning implementation of the HuggingFace transformer library, including rank-8 LoRA, gradient checkpointing, and cross-entropy loss chunking. For ProxyProp we use a residual proxy transformer with 64 dimensions.
  }
  \label{fig:llm_length}
  \vspace{-0.5\baselineskip}
\end{wrapfigure}

\section{Discussion}
We analyzed theoretically how to compute local and memory-efficient gradient estimates in transformers.
We present ProxyProp as a local learning algorithm which estimates the global gradient with theoretical guarantees.
In practice, this algorithm trains ViTs and fine-tunes LLMs to near-backpropagation accuracy on a low memory budget.
Below we speculate the potential impact of local learning beyond what we have presented in previous sections.

\textbf{Local learning for very deep, looped, or diffusion architectures.}
The memory benefits of local learning are already visible on moderately deep LLM architectures (dozens of layers), but ProxyProp memory consumption is constant with the number of layers. From the viewpoint of memory, the benefits increase with deeper networks. 
Local learning has the potential to scale to much deeper architectures which are hard to train with standard backpropagation.

A case of interest is a looped transformer in which repeating the recurrence is effectively increasing the depth of the network. To avoid increasing the memory storage, the gradients are not propagated through all of the loops; rather, the gradient is propagated only through the last (or last few) loop iterations~\cite{yang2024looped}.
Local proxy propagation learning would be adapted to this setting, as it offers theoretical insights and a method to compute gradients which are aware of all the repeated loops without increasing memory storage. 
From another perspective, a single block in residual models is analogous to a diffusion step.
Pushing this analogy, \cite{li2026noprop} and \cite{shing2026diffusionblocks} train deep residual architectures using diffusion training schemes.
Reciprocally, our analyses might inspire methods to compute global gradients from a single local diffusion step.

\textbf{Decoupling parameter updates.}
The potential of local learning is not limited to memory saving. 
By solving the backward locking problem, local learning also enables model-parallelism during training \citep{jaderberg2017syngrad,belilovsky2020dgl}. 
While the computation speed on a single device is roughly similar between ProxyProp and backpropagation (see Table~\ref{tab:complexity}), decoupling becomes relevant when training across devices. 
As described, ProxyProp from Algorithm~\ref{alg:training} preserves a weak coupling. 
We see at least two directions to solve this issue: firstly, the pre-forward pass could be computed ahead of time for a large batch without coupling; with mini-batches then being dispatched in order and iterated. 
Hence, the algorithm can be applied if one accepts a lag between the loss gradients and the local block that it is then sent to. 
Alternatively, we describe in Algorithm \ref{alg:unlock} the fully unlocked ProxyProp variant that uses a pretrained proxy to train the network.
Although we have not studied distributed implementations of our algorithm at the moment, we feel that it is a promising direction for future work.

\subsection*{AI use statement}


We used generative AI tools to help review the literature exhaustively. We have used AI to brainstorm mathematical justifications for the algorithm we had planned to implement.
We used AI to generate code, but we implemented ourselves the critical core functions of the local learning algorithm which is central to this publication.
Less critical code, such as the implementation of a toy example or standard baselines, was quality controlled by the entire team using numerical and visual verifications.
We wrote the first versions of every mathematical proof ourselves, and used AI to improve the writing or mathematical consistency and verify our claims. All the final proofs have been reviewed by all the authors.

We have not used LLMs to seed or design the research direction.
We have not used AI to write text or ideas. All the text was written manually without AI, but AI was used to find typographic mistakes.
All the AI corrections were reviewed each time by one of the authors.
We have not used AI to generate autonomously the experiments or optimize the performance autonomously.
We did not use AI to recommend specific data analysis strategies, numerical experiments or visualization.
Provided with data from our experiments, we used AI to help format figures in tikz.
We take responsibility for the final content of this work,
including text, claims or artifacts produced with the aid of generative AI.

\subsection*{Reproducibility statement}
Upon acceptance of the paper, we will publish the PyTorch implementation and local learning wrapper for the Trainer class of the Hugging Face library on GitHub.
The datasets used in all experiments are freely and publicly available. 
All authors have been able to use our algorithm on different datasets and successfully trained a neural network model with consistent training speed and memory RAM requirement.
The results that are claimed were consistent across datasets, even when the experiments were implemented by different authors.

\subsubsection*{Acknowledgments}
Brian Andrew Yuan (BAY) and Wu S. Zihan (WSZ) contributed equally, and Guillaume Bellec (GB) acted as a research supervisor. All three authors contributed fully to seeding the research direction, the mathematical theory, running experiments, and writing the paper. BAY ran all the applications and comparisons of local learning on the vision transformer, WSZ ran the simulations for language models.

This research is supported by Vienna Science and Technology Fund (VRG24-018), the Swiss National Science Foundation grants (200021-236436), and Machinita Technologies. We thank Florian Chen, Ziao Wang, Qinyu Li, Ariane Delrocq, Wulfram Gerstner and everyone from the Machine Learning Research Unit of TU Wien for their feedback and useful discussions. 

\newpage
\bibliography{References}
\bibliographystyle{iclr2027_conference}

\newpage
\appendix
\renewcommand{\thetable}{A\arabic{table}}
\setcounter{table}{0}
\renewcommand{\thefigure}{A\arabic{figure}}
\setcounter{figure}{0}

\section{Scaling of memory and compute}

\begin{table}[h]
    \centering
    \small
    \caption{Scaling of compute and memory cost with training an $L$-block network. $^*$ For transformer, the optimal activation checkpointing is not $\mathcal{O}(\sqrt{L})$ because attention and MLP within each layer also requires plenty of memory storage. A common practice is to store the output of each transformer layer and recompute the activations of attention and MLP, which make the activation storage $\mathcal{O}(L)$.}
    \label{tab:complexity}
    \begin{tabular}{lccc}
    \hline
    Method & Forward FLOPs & Backward FLOPs & Activation storage \\
    \hline
    Backprop (BP) & $\mathcal{O}(L)$ & $\mathcal{O}(L)$ & $\mathcal{O}(L)$ \\
    BP + checkpointing & $\mathcal{O}(2 L)$ & $\mathcal{O}(L)$ & $ \mathcal{O}(\sqrt{L})^*$ \\
    ProxyProp & $\mathcal{O}(2L)$ & $\mathcal{O}(L)$ & $\mathcal{O}(1)$ \\
    \hline
    \end{tabular}
\end{table}

\begin{algorithm}[bth]
\caption{Description of the fully unlocked ProxyProp algorithm. The input $z_0$ is given. $p_{\phi}^L(z_l)$ is always the identity.}
\label{alg:unlock}
\vspace{4pt}

Pretrain the proxy parameters $\phi$ with the distillation loss from Equation~\ref{eq:proxyfunc}\
\vspace{4pt}

\For{$\ell = 1$ \KwTo $L$}{
    $z_\ell \gets f_\ell(z_{\ell-1})$\;
    Compute the proxy $p_{\phi}^{\ell}(z_\ell)$\;
    Compute loss $\mathcal{L}(\theta_h, p_{\phi}^\ell(z_\ell), y)$\;
    Update $\theta_\ell$ with local gradient using $\partial \mathcal{L}/\partial \theta_\ell$\;
    \If{$\ell = L$}{Update $\theta_h$\;}
    \textbf{delete} $z_{\ell-1}$\;
}
\end{algorithm}

\section{Proof for Proposition \ref{thm-early-training}}
\label{appendix:proof}
We use the notation of the main text: the network is partitioned into blocks $z_\ell = f_{\theta}^\ell(z_{\ell-1})$, each block is residual $f_{\theta}^\ell(z) = z + r_{\theta}^\ell(z)$, and the proxy is residual $p_{\phi}^\ell(z) = z + q_{\phi}^\ell(z)$. We assume that every residual above $\ell$, that is $r_{\theta}^{\ell'}$ for $\ell' > \ell$, and the proxy residual $q_{\phi}^\ell$ (linear or not) are all $\epsilon$-Lipschitz, i.e. $\| r_{\theta}^{\ell'}(z) - r_{\theta}^{\ell'}(z') \| \leq \epsilon \| z - z' \|$ and $\| q_{\phi}^\ell(z) - q_{\phi}^\ell(z') \| \leq \epsilon \| z - z' \|$ for all $z, z'$. We write $R^{\ell'} = \partial r_{\theta}^{\ell'}/\partial z$ and $Q^\ell = \partial q_{\phi}^\ell/\partial z$ for the Jacobians of these residual maps, so that $\| R^{\ell'} \|_{\mathrm{op}} \leq \epsilon$ and $\| Q^\ell \|_{\mathrm{op}} \leq \epsilon$ everywhere ($\| \cdot  \|_{\mathrm{op}}$ denotes the operator norm). By definition of the backpropagated gradient and of ProxyProp's gradient estimate in Equation~\ref{eq:proxygrad}, we recall that:

\begin{equation}
g_{\theta_\ell}^{BP} = \frac{\partial f_{\theta}^\ell}{\partial \theta}^{\!\top} \frac{\partial z_L}{\partial z_\ell}^{\!\top} \frac{\partial \mathcal{L}}{\partial z_L}\,,
\qquad
g_{\theta_\ell}^{loc} = \frac{\partial f_{\theta}^\ell}{\partial \theta}^{\!\top} \frac{\partial p_{\phi}^\ell}{\partial z_\ell}^{\!\top} \frac{\partial \mathcal{L}}{\partial z_L}\,.
\end{equation}

\textbf{Expanding the stacked Jacobian.} By the chain rule, $\frac{\partial z_L}{\partial z_\ell} = \prod_{\ell'=\ell+1}^{L} (I + R^{\ell'})$, ordered with decreasing index from left to right, a product of $L-\ell$ near-identity factors, one for each block above $\ell$. Expanding this product as a sum over which factors contribute $R^{\ell'}$ and which contribute $I$ gives $2^{L-\ell}$ paths inside the computational graph (at each layer a path either goes through a skip connection or through a residual map). The leading path $\pi_0$, which goes only through identities, contributes exactly $I$, and is what the naive identity proxy ($p_{\phi}^\ell(z_\ell) = z_\ell$) estimates. Grouping the remaining paths by their number of non-identity factors:

\begin{equation}
\frac{\partial z_L}{\partial z_\ell} = I + {\sum_{\ell'=\ell+1}^{L} R^{\ell'}} + {\sum_{\ell < \ell' < \ell'' \leq L} R^{\ell''} R^{\ell'} + \dots}\,.
\end{equation}

Since $\| R^{\ell'} \|_{\mathrm{op}} \leq \epsilon$ for all $\ell'$, the sum of the $\binom{L-\ell}{k}$ terms with $k \geq 2$ paths is bounded, by submultiplicativity of the operator norm, by $\sum_{k \geq 2} \binom{L-\ell}{k} \epsilon^k = O(\epsilon^2)$, so:

\begin{equation}
\frac{\partial z_L}{\partial z_\ell} = I + \sum_{\ell'=\ell+1}^{L} R^{\ell'} + O(\epsilon^2)\,.
\end{equation}

Likewise, by definition of the residual proxy, $\frac{\partial p_{\phi}^\ell}{\partial z_\ell} = I + Q^\ell$ exactly, with $\| Q^\ell \|_{\mathrm{op}} \leq \epsilon$.

\textbf{Comparing ProxyProp to backpropagation.} Subtracting the two expansions:

\begin{equation}
\frac{\partial z_L}{\partial z_\ell} - \frac{\partial p_{\phi}^\ell}{\partial z_\ell} = \sum_{\ell'=\ell+1}^{L} R^{\ell'} \; - \; Q^\ell \; + \; O(\epsilon^2)\,,
\end{equation}

so that, using the triangle inequality and $\| R^{\ell'} \|_{\mathrm{op}}, \| Q^\ell \|_{\mathrm{op}} \leq \epsilon$:

\begin{equation}
\Big\| \frac{\partial z_L}{\partial z_\ell} - \frac{\partial p_{\phi}^\ell}{\partial z_\ell} \Big\|_{\mathrm{op}} \leq (L-\ell)\, \epsilon + \epsilon + O(\epsilon^2) = (L-\ell+1)\, \epsilon + O(\epsilon^2)\,.
\end{equation}

Finally, using submultiplicativity of the operator norm, and setting $C = (L-\ell+1)\, \| \partial f_{\theta}^\ell/\partial \theta \|_{\mathrm{op}} \, \| \partial \mathcal{L}/\partial z_L \|$:

\begin{equation}
\| g_{\theta_\ell}^{BP} - g_{\theta_\ell}^{loc} \| \leq
\Big\| \frac{\partial f_{\theta}^\ell}{\partial \theta} \Big\|_{\mathrm{op}}
\Big\| \frac{\partial z_L}{\partial z_\ell} - \frac{\partial p_{\phi}^\ell}{\partial z_\ell} \Big\|_{\mathrm{op}}
\Big\| \frac{\partial \mathcal{L}}{\partial z_L} \Big\|
\leq C\, \epsilon + O(\epsilon^2)\,,
\end{equation}

which is the claimed bound. The argument never used linearity of $q_{\phi}^\ell$: only $\| Q^\ell \|_{\mathrm{op}} \leq \epsilon$ was needed, so it applies equally to non-linear proxies. The naive identity proxy ($q_{\phi}^\ell = 0$, so $Q^\ell = 0$) is the valid special case.

\section{Experiment Details}
\label{appendix:detail}

\subsection{Numerical Verification}

A 6-layer MLP on MNIST with width 64 and batch size 4096.
Each layer is a transformer-style branch $W_{l,2}\textnormal{ReLU}(W_{l,1},z_{l-1})$ with hidden width 256 either with or without the skip connection.

Figure \ref{subfig:mlpgradalign} solves a linear proxy closed-form to $z_L$ either on the plain network or to $z_L - z_l$ on the residual network.
The alignment is measured between the local gradient at each layer through the proxy and the true gradient.

Figure \ref{subfig:residpred} freezes the residual network at random and measures the alignment of layer 0 when the linear proxy is trained by Adam for 800 steps.
The flat line denotes the closed-form solve.

\subsection{ViT}

\subsubsection{Specifications}

All CIFAR-100 experiments use a 12-layer vision transformer of width 128, 4 attention heads, patch size 4 with classification on a final mean token readout.
For local methods, the final LayerNorm is placed in the classifier rather than in the last block so that local methods read raw activations at each partition boundary.
The classifier is zero-initialized, as in the original ViT.

Training uses the AdamW optimizer with batch size 128 at learning rate $5\times10^{-4}$ with 10\% linear warm-up and cosine decay over 100 epochs.
ProxyProp uses a single transformer at width 64 and is trained by the same optimizer as the backbone.
We report last-epoch accuracies.

To highlight the savings in memory in Figure~\autoref{subfig1b:accmem}, we increase the batch size to 512 and measure peak memory usage for one epoch.

\subsection{Finetuning}
We follow the training setup in MeZO \citep{malladi2023mezo} and build our experiments based on their published repository. All models are finetuned for 10 epochs with batch size 8, and the epoch with the best validation loss is used for evaluation on the test dataset. For OPT-13B, we use rank-8 LoRA and split the 40-layer network into 8 blocks. 
We use the Adam optimizer and swept the learning rate ($3\times 10^{-5}, 5\times 10^{-5}, 10^{-4}, 2\times 10^{-4}$) to select the optimal learning rate. 
The proxies are parameterized as: a linear projection down to 80 dimension, a small OPT transformer layer with 80 hidden dimension and 40 heads, and a linear projection back to network hidden dimension. The second linear projector is zero-initialized. A residual connection to each proxy is added unless specified. For LLaMA-3B, we split the 28 layer network into 7 blocks. The proxies are parameterized as a small LLaMA transformer layer with 64 hidden dimensions and linear projectors mapping to network hidden dimension. We use the Adam optimizer and swept the learning rate ($5\times 10^{-6}, 10^{-5}, 3\times 10^{-5}$) to select the optimal learning rate. For all experiments, parameters of the proxies are trained with the same learning rate as the network parameters. For the local loss objective, we use the shared classification head for all blocks, because separate classification heads would result in a large memory cost. 
We use bfloat16 mixed precision training. All other settings remain exactly the same as in \cite{malladi2023mezo}. Experiments are performed on one 80G Nvidia H100 for SQuAD and one A100 for SST2 and WIC.

When profiling the memory, we train the network with synthetic random data. To reduce the large memory requirement of language head, we compute the cross-entropy loss by chunking the final activation along the sequence dimension into pieces of a length 512.



\section{Additional ablations}
\label{appendix:ablation}

\subsection{ViT experiments}

We perform further ablations in Table \ref{tab:b3-baselines-ext}.
At convergence, we find no significance between residual and non-residual parameterization of the proxy in the classification, despite the clear advantage of residual in fine-tuning tasks.
Further, we find that by prepending some transformer layers to the head \citep{gelbard2026unlocking} and training the proxies to distill the input into the deep head is advantageous. In this case, $z_L$ would not be the last layer of the network but the last $k$-th layer with the parameters of the last $k$ layers as part of $\theta_h$. We found that it could further improve performance at roughly the same memory cost.

To make ProxyProp ``update unlocked,'' we use Minitron \citep{muralidharan2024minitron} to prune the transformer layer above each block into a 64-wide transformer proxy at the beginning of each epoch.
We train this proxy for 20\% of the batch, with the backbone frozen.
Then, we freeze the proxy, and train the backbone with the proxy frozen for the entire batch. 

Finally, we offer another baseline where we train a single block of the partitioned network with the proxy attached with end-to-end by backpropagation on the classification loss under the same optimizer. 
In ProxyProp, the proxy represents the distilled downstream network; this baseline suggests this distillation is more useful than simply augmenting the block-wise local loss with an additional proxy.

\begin{table}
    \centering
    \small
    \caption{\textbf{Extended ablations and baselines of ViT on CIFAR-100.}}
    \label{tab:b3-baselines-ext}
    \begin{tabular}{lcccc}
    backprop & & $66.13 {\scriptstyle{\pm 0.55}}$ & & 12.30 \\
    \hline
    Method & & top-1 accuracy & & RAM (GiB)\\
     & {$\scriptstyle 3\times4$} & {$\scriptstyle 6\times2$} & {$\scriptstyle 12\times1$} & {$\scriptstyle 3\times4/6\times2/12\times1$} \\
    \hline
    ProxyProp & ${64.90} {\scriptstyle \pm 0.44}$ & ${61.51} {\scriptstyle \pm 0.19}$ & ${57.58} {\scriptstyle {\scriptstyle \pm 0.40}}$ & 4.64 / 2.76 / 1.90 \\
    \quad non-residual proxy & $64.86 {\scriptstyle \pm 0.43}$ & $61.46 {\scriptstyle \pm 0.20}$ & $57.73 {\scriptstyle \pm 0.58}$ & 4.62 / 2.74 / 1.88 \\
    \quad deep head & $65.47 {\scriptstyle \pm 0.77}$ & $63.38 {\scriptstyle \pm 0.47}$ & --- & 4.85 / 2.92 / --- \\
    \quad fully-unlocked & $64.78 {\scriptstyle \pm 0.42}$ & $59.79 {\scriptstyle \pm 0.20}$ & $42.02{\scriptstyle \pm 1.28}$ & 4.60 / 2.67 / 1.71 \\
    \quad closed-form solve (EMA) & $62.93 {\scriptstyle \pm 0.59}$ & $58.95 {\scriptstyle \pm 0.47}$ & $48.16 {\scriptstyle \pm 0.43}$ & 4.62 / 2.69 / 1.72 \\
    \quad closed-form solve & $58.56 {\scriptstyle \pm 1.42}$ & $46.40 {\scriptstyle \pm 7.81}$ & $31.92 {\scriptstyle \pm 0.84}$ & 4.60 / 2.67 / 1.70 \\
    \quad random residual linear proxy & $57.37 {\scriptstyle \pm 0.63}$ & $55.90 {\scriptstyle \pm 0.39}$ & $46.03 {\scriptstyle \pm 0.25}$ & 4.62 / 2.69 / 1.72 \\
    \quad random linear proxy & $21.03 {\scriptstyle \pm 9.49}$ & $13.02 {\scriptstyle \pm 1.19}$ & $13.87 {\scriptstyle \pm 1.71}$ & 4.60 / 2.67 / 1.70 \\

    \hline
    backprop (one block) & $63.93 {\scriptstyle \pm 0.60}$ & $60.00 {\scriptstyle \pm 0.54}$ & $50.33 {\scriptstyle \pm 0.32}$ & 4.44 / 2.47 / 1.49 \\
    BP (one block + proxy)& $63.71 {\scriptstyle \pm 0.46}$ & $60.31 {\scriptstyle \pm 0.26}$ & $54.39 {\scriptstyle \pm 0.44}$ & 4.44 / 2.47 / 1.49 \\
    
    \end{tabular}
  \end{table}
  
\subsection{Finetuning}

We also study the importance of proxy training in SQuAD fine-tuning tasks. We found that the identity proxy has already reached a good performance, but a trained proxy further improves the performance. Consistent with findings on ViT, prepending the last block with the loss head improves the performance.
\begin{table}
  \caption{Performance of applying ProxyProp variants to finetuning OPT-13B (with LoRA-8) on SQuAD. All ProxyProp variants split the network into 8 blocks and use proxies with hidden dimension 80. The value after $\pm$ is the 95\% confidence interval computed from 5 different random seeds.}
  \label{tab: finetune-ablation}
  \begin{center}
    \begin{small}
          \begin{tabular}{lc}
           \hline
            Method & SQuAD Performance\\
            \hline
            Backprop & $88.30 \pm 0.82$ \\
            ProxyProp & $84.75 \pm 0.91$\\
            ProxyProp with fixed identity proxy & $82.15 \pm 1.82$\\
            ProxyProp + last block in the head& \textbf{$86.20 \pm 1.15$} \\
            \hline
        \end{tabular}
    \end{small}
  \end{center}
\end{table}

\end{document}